\newcommand{\Method}{Diversified Batch Selection}
\newcommand{\method}{DivBS}
\newcommand{\Methodspace}{Diversified Batch Selection }
\newcommand{\Methodspaceunderline}{\underline{Div}ersified \underline{B}atch \underline{S}election }
\newcommand{\methodspace}{DivBS }
\newcommand{\titlename}{Diversified Batch Selection for Training Acceleration}
\newcommand{\cleantitlename}{\StrSubstitute{\titlename}{\\}{}[\temp]\temp}
\definecolor{mediumpersianblue}{rgb}{0.0, 0.4, 0.65}
\definecolor{citecolor}{RGB}{0, 50, 110}
\definecolor{linkcolor}{RGB}{150, 50, 50}
\definecolor{darkred}{RGB}{139, 0, 0}
\theoremstyle{plain}
\newtheorem{theorem}{Theorem}[section]
\newtheorem{proposition}[theorem]{Proposition}
\newtheorem{lemma}[theorem]{Lemma}
\theoremstyle{definition}
\newtheorem{definition}[theorem]{Definition}
\theoremstyle{remark}
\def\gD{{\mathcal{D}}}
\def\gE{{\mathcal{E}}}
\newcommand{\IE}{\emph{i.e.}}
\newcommand{\EG}{\emph{e.g.}}
\DeclareMathOperator{\st}{s.t.}
\icmltitlerunning{\cleantitlename}
\begin{document}
\twocolumn[
\icmltitle{\titlename}



\icmlsetsymbol{intern}{\dag}

\begin{icmlauthorlist}
\icmlauthor{Feng Hong}{sjtu,intern}
\icmlauthor{Yueming Lyu}{cfar,ihpc}
\icmlauthor{Jiangchao Yao}{sjtu,ailab}
\icmlauthor{Ya Zhang}{sjtu,ailab}
\icmlauthor{Ivor W. Tsang}{cfar,ihpc,ntu}
\icmlauthor{Yanfeng Wang}{sjtu,ailab}
\end{icmlauthorlist}

\icmlaffiliation{sjtu}{Cooperative Medianet Innovation Center, Shanghai Jiao Tong University, Shanghai, China}
\icmlaffiliation{cfar}{CFAR, Agency for Science, Technology and Research (A*STAR), Singapore}
\icmlaffiliation{ihpc}{IHPC, Agency for Science, Technology and Research (A*STAR), Singapore}
\icmlaffiliation{ailab}{Shanghai AI Laboratory, Shanghai, China}
\icmlaffiliation{ntu}{College of Computing and Data Science, NTU, Singapore}

\icmlcorrespondingauthor{Jiangchao Yao}{Sunarker@sjtu.edu.cn}
\icmlcorrespondingauthor{Yanfeng Wang}{wangyanfeng@sjtu.edu.cn}

\vskip 0.3in
]



\printAffiliationsAndNotice{\textsuperscript{\dag}Work down during internship at CFAR, A*STAR}  

\begin{abstract}
    The remarkable success of modern machine learning models on large datasets often demands extensive training time and resource consumption.  
    To save cost, a prevalent research line, known as online batch selection, explores selecting informative subsets during the training process.
    Although recent efforts achieve advancements by measuring the impact of each sample on generalization, their reliance on additional reference models inherently limits their practical applications, when there are no such ideal models available. On the other hand, the vanilla reference-model-free methods involve independently scoring and selecting data in a sample-wise manner,
    which sacrifices the diversity and induces the redundancy.
     To tackle this dilemma, we propose \Methodspaceunderline (\method), which is reference-model-free and can efficiently select diverse and representative samples. 
Specifically, we define a novel selection objective that measures the group-wise orthogonalized representativeness to combat the redundancy issue of previous sample-wise criteria, and provide a principled selection-efficient realization.
Extensive experiments across various tasks demonstrate the significant superiority of \methodspace in the performance-speedup trade-off. The \href{https://github.com/Feng-Hong/DivBS}{code} is publicly available.
\end{abstract}
\section{Introduction}
Deep learning, propelled by vast amounts of web-scraped data, has led to significant advancements in models such as GPT-4~\citep{DBLP:journals/corr/abs-2303-08774}, CLIP~\citep{DBLP:conf/icml/RadfordKHRGASAM21}, SAM~\citep{Kirillov_2023_ICCV}, and Stable Diffusion~\citep{Rombach_2022_CVPR}. However, the time-intensive training process, lasting for weeks or even months, poses challenges with extended development cycles and increased resource consumption. Additionally, with a growing focus on data quality in deep learning systems, given the prevalence of low-quality, redundant, and biased data in real-world scenarios~\citep{DBLP:journals/corr/abs-2302-03169,deng2023towards}, there is an increasing need to select valuable training data for accelerating model training while maintaining the performance.

Recent studies~\citep{pmlr-v162-mindermann22a,deng2023towards} have achieved notable acceleration and convergence results by employing the online batch selection~\citep{Loshchilov2015OnlineBS} paradigm, which involves selecting samples that are most conducive to model convergence at the current training stage.
However, these reference-model-based methods rely on extra reference models, either trained from a considerable amount of holdout data~\citep{pmlr-v162-mindermann22a} or a pre-trained zero-shot predictor~\citep{deng2023towards}. Obtaining such a reference model can be costly or challenging in certain scenarios, especially for large-scale pre-training tasks. On the other hand, reference-model-free online batch selection methods~\citep{Jiang2019AcceleratingDL,pmlr-v80-katharopoulos18a,Loshchilov2015OnlineBS} prioritize challenging samples based on high loss or large gradient norm. Despite their practicality and efficiency, they often fall short in performance even compared to uniform selection~\citep{pmlr-v162-mindermann22a,deng2023towards}. 

\begin{figure*}[t]
    \vskip 0.2in
\subfigure[Full Data]{
    \begin{minipage}{0.233\textwidth}
    \centering     
    \includegraphics[width=\textwidth]{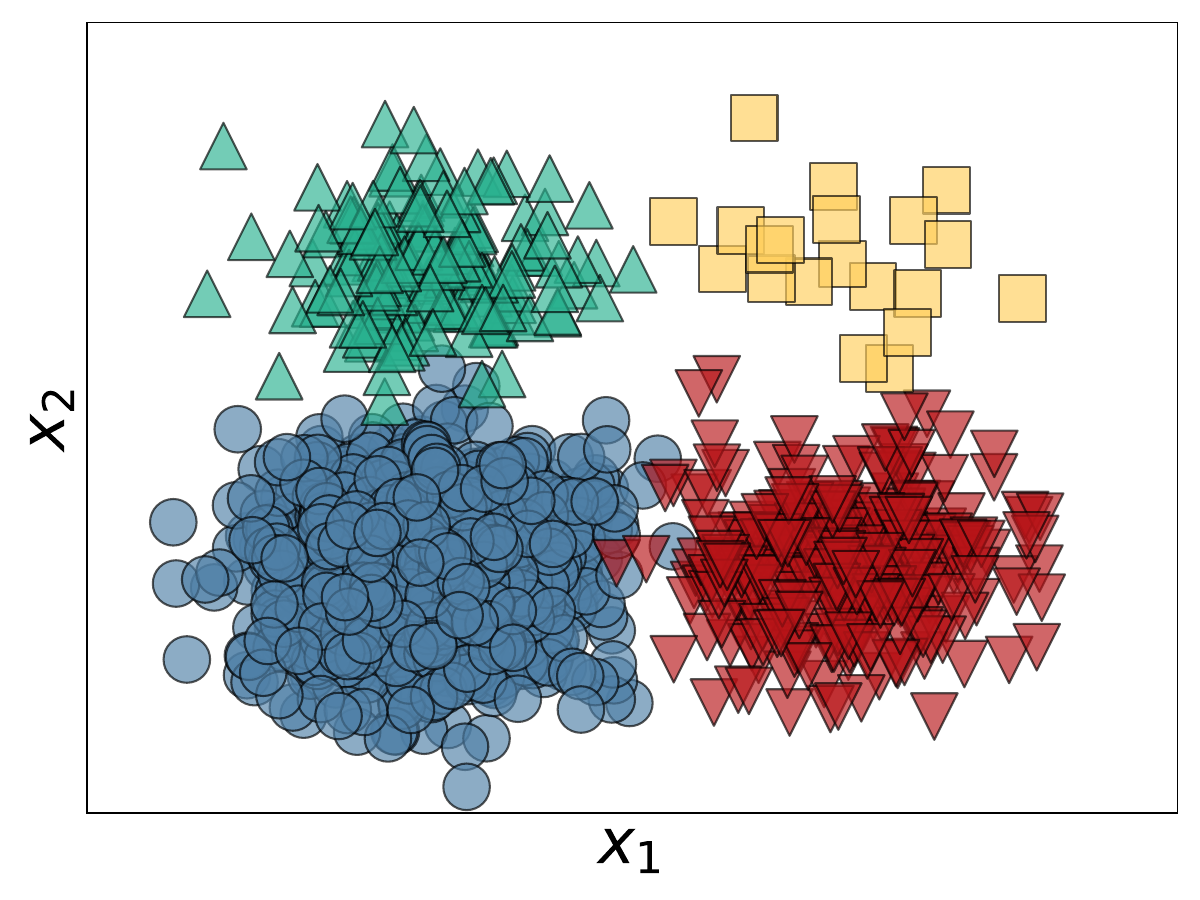}
    \end{minipage}
    }
\subfigure[Uniform]{
    \begin{minipage}{0.233\textwidth}
    \label{fig:toy uniform}
    \centering     
    \includegraphics[width=\textwidth]{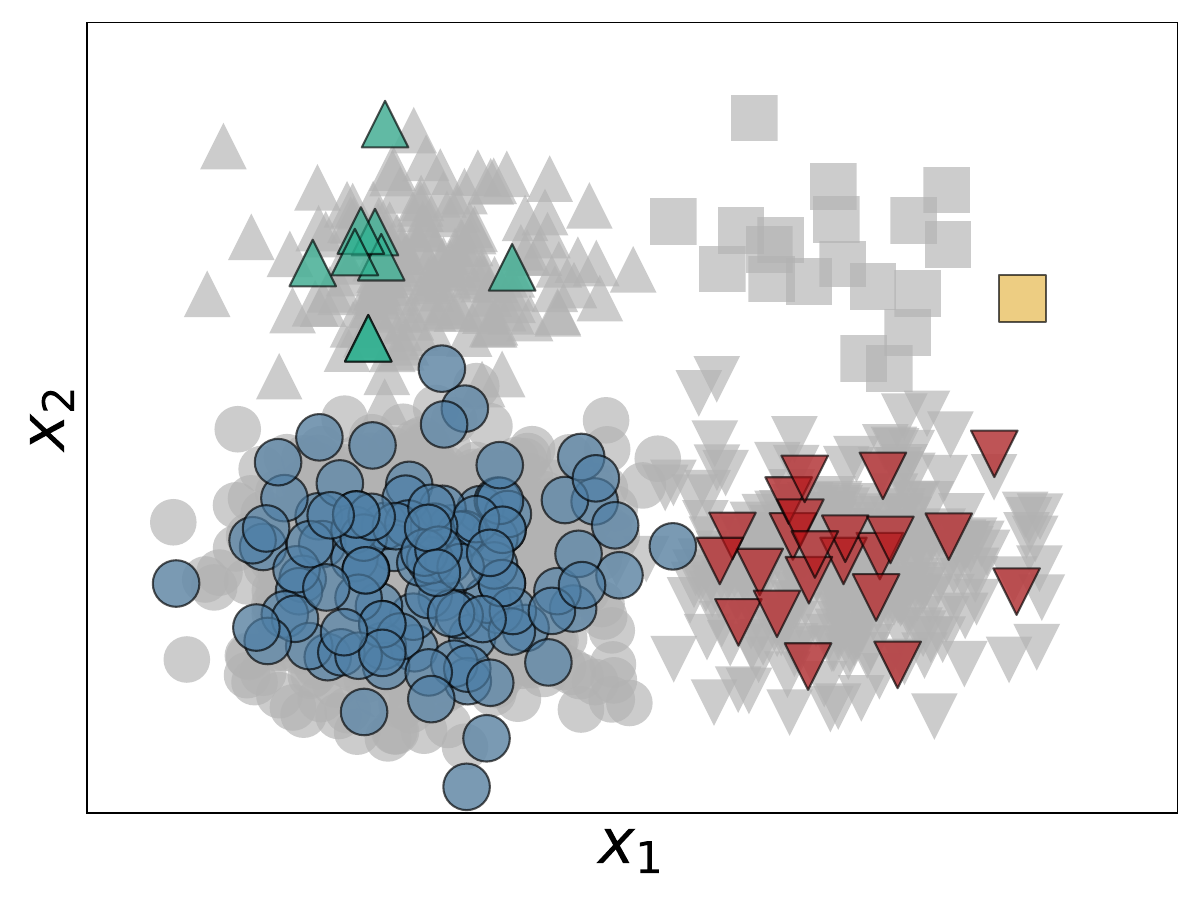}
    \end{minipage}
    }
\subfigure[Train Loss]{
    \begin{minipage}{0.233\textwidth}
    \centering     
    \label{fig:toy train loss}
    \includegraphics[width=\textwidth]{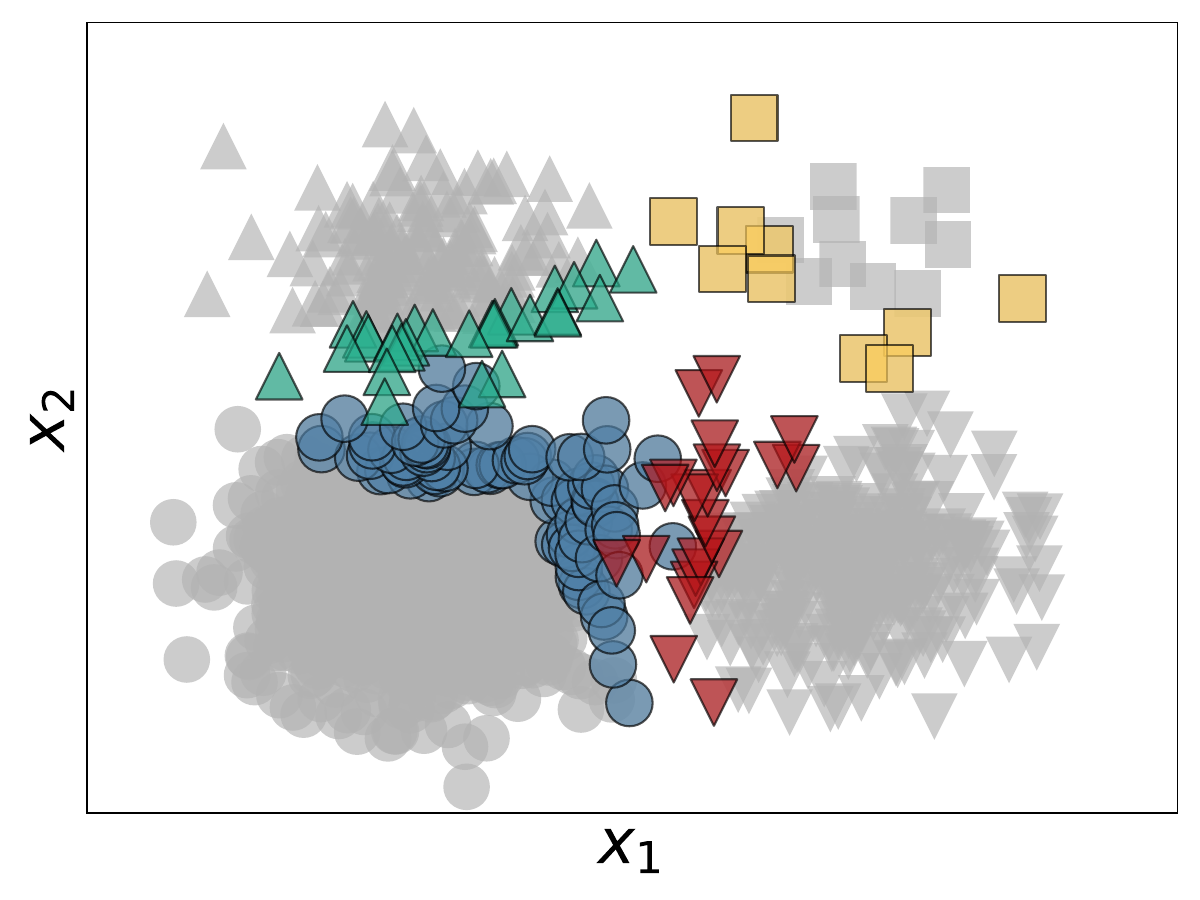}
    \end{minipage}
    }
\subfigure[DivBS]{
    \begin{minipage}{0.233\textwidth}
    \label{fig:toy DivBS}
    \centering     
    \includegraphics[width=\textwidth]{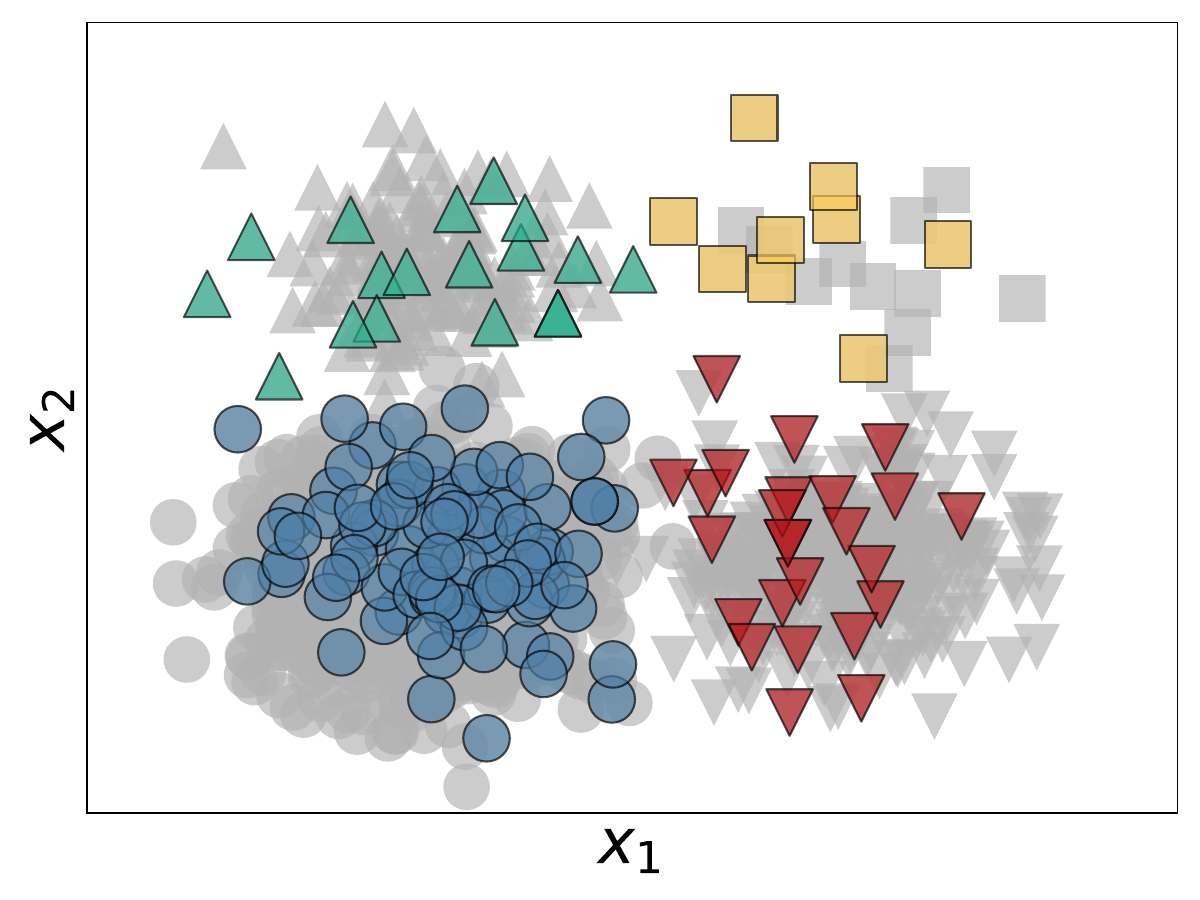}
    \end{minipage}
    }
\caption{Visualization of a toy motivating example, which is a 2D imbalanced four-class classification problem. Subfigure (a)  represents all the training data. Subfigures (b), (c), and (d) depict the subsets selected by the Uniform, Train Loss, and DivBS methods, with 10\% budget. For more details, please refer to \cref{appendix:toy}.}
\label{fig: toy}
\end{figure*}

In this paper, we focus on selecting a limited budget of crucial samples in a reference-model-free batch selection manner for training acceleration with negligible performance drop. We contend that existing reference-model-free selection methods adopt sample-wise strategies. Specifically, they independently apply predefined scoring criteria to all samples and conduct the score-based selection. Such \emph{sample-wise selection} methods overlook the correlations and redundancies among samples and may lead to poor diversity, degrading the performance.
When a sample is selected based on a high score, similar (or even identical) samples also receive similar scores. However, these samples contribute negligible new information to model training. As shown in \cref{fig:toy train loss}, samples selected based on train loss exhibit significant overlap and poor coverage of the original space. \citet{DBLP:conf/iclr/ZhengLL023,deepcore} have also discussed the negatives of such methods on subset diversity and performance in the context of the one-shot coreset selection, particularly with small budgets.

To tackle the diversity challenge, we propose a novel  reference-model-free batch selection method, \Methodspaceunderline (\method). Our core concept encompasses two aspects: (1) data selection should consider the selected subset as a whole, rather than independently selecting data on a sample-wise basis; (2) when assessing the overall representativeness of a subset, the inter-sample redundancy should be eliminated. Motivated by this, we introduce a new objective for batch selection, aiming to maximize the overall orthogonalized representativeness of the subset after removing the inter-sample redundancy
(\cref{eq:representativeness-subset,eq:objective}). 
Through a principled simplification of the optimization problem (\cref{prop:basis}), we propose a greedy algorithm (\cref{alg:greedy}) that can theoretically achieve an approximate ratio of $1-e^{-1}$ \emph{w.r.t.} the optimal objective value (\cref{prop:r guarantee}). We further streamline the selection process (\cref{alg:DivBS}), empirically achieving substantially reduced time consumption with comparable performance. We summarize the contributions as follows:
\begin{itemize}[itemsep=2pt,parsep=2pt,topsep=2pt,partopsep=2pt,leftmargin=15pt]
\item We explore prioritizing samples that enhance model convergence without extra reference models and highlight the diversity challenge faced by current sample-wise online batch selection 
methods.
\item We propose to maximize the overall orthogonalized representativeness of the subset rather than independently sample-wise selection. Building on a greedy algorithm with a $1-e^{-1}$ approximate ratio, we present a more efficient reference-model-free batch selection method, \Methodspace(\method).
\item We conduct extensive experiments, covering image classification, imbalanced learning, semantic segmentation, cross-modal retrieval, and language model fine-tuning. The results consistently demonstrate the superiority of \methodspace in accelerating training while maintaining the performance, \EG, with 70\% fewer iterations,  classification accuracy drops by under 0.5\% on average, segmentation mIoU decreases by under 1\%, and cross-modal retrieval performance improves.
\end{itemize}

\section{Background: Online Batch Selection}
We consider a task of 
learning a deep model $f_\theta$ with parameters $\theta$ on training data $\gD$ with stochastic gradient descent (SGD). 
At each training step, we can access a data batch $B = \{d_i\}_{i=1}^{N_B}$ with $N_B$ data points from $\gD$. In the online batch selection scenario, we need to conduct a smaller batch $S \subset B$ with a fixed budget of sample number $N_S<N_B$ to update the model. The next large batch is then pre-sampled from $\gD$ without replacement of previously sampled points in a same epoch.

Let \( U = g(B,\theta) = \{g(d_i,\theta)\}_{i=1}^{N_B} \) denote the features used for selection from \( B \), where $g$ denotes the mapping function from data points to the selection features given current model $f_{\theta}$. Existing methods simplify the problem of selecting a subset from $B$ into a sample ranking problem. Employing different scoring criteria $s(u), u\in U$, they select the top-$N_S$ samples from $B$ to conduct $S$. \citet{Loshchilov2015OnlineBS,Jiang2019AcceleratingDL} opt for high loss, where $U$ contains the outputs and labels, and $s(\cdot)$ is the loss function; \citet{pmlr-v80-katharopoulos18a} select samples with large gradient norm, where $U$ is the sample-wise gradients, and $s(\cdot)$ is the norm function; \citet{pmlr-v162-mindermann22a,deng2023towards} leverage a reference model to compute the score, where $U$ contains the training model outputs, the reference model outputs, and class labels, and $s(\cdot)$ is an approximate version of the generalization loss.
These methods select data in a sample-wise manner, without considering the interactions and redundancy among samples when they collectively update the model within a batch.

\section{Method: \Method}
\subsection{Motivation}
In \cref{fig: toy}, we present a toy example involving an imbalanced four-class classification task and showcase subsets selected by different batch selection methods. We can observe that the uniform sampling method (\cref{fig:toy uniform}) does not always achieve effective coverage of the original sample space, especially for the low-density (yellow and green) regions.
The method of selecting challenging samples (Train Loss) (\cref{fig:toy train loss}) results in high redundancy, with many points nearly overlapping. Additionally, the data distribution deviates significantly from the overall distribution, leading to inferior model convergence. We posit that this stems from current methods independently scoring and selecting data in a sample-wise manner. We propose that the overall evaluation of the selected samples should be conducted instead of sample-wise scoring, and the impact of inter-sample redundancy should be removed.
It is evident that the subset selected by our proposed \methodspace (\cref{fig:toy DivBS}) effectively covers the original sample space and significantly enhances the diversity of the selected samples.

\begin{algorithm}[t]
   \caption{The greedy algorithm.}
   \label{alg:greedy}
\begin{algorithmic}[1]
   \STATE {\bfseries Input:} Batch $B$, current model $f_{\theta}$, budget number $N_S$
   \STATE {\bfseries Output:} Selected mini batch $S$
   \STATE $S \leftarrow \emptyset$,  $E \leftarrow \emptyset$, $\mathrm{Sum} \leftarrow \sum_{u\in g(B,\theta)}u$
   \REPEAT
   \STATE $E_{\text{Cand}} \leftarrow \{\frac{g(d,\theta) - \sum_{e \in E} (e \cdot g(d,\theta)) e}{\|g(d,\theta) - \sum_{e \in E} (e \cdot g(d,\theta)) e\|} \text{ for } d \in B\}$ \\\textcolor{gray}{\texttt{\textit{// Candidate orthonormal basis}}}
   \STATE $\text{idx} \leftarrow \arg\max_{i} |e_i\cdot \mathrm{Sum}|, e_i \in E_{\text{Cand}}$ 
   \STATE $S \leftarrow S \cup \{d_{\text{idx}}\}$
   \STATE $E \leftarrow E \cup \{e_{\text{idx}}\}$
   \STATE $B \leftarrow B \setminus \{d_{\text{idx}}\}$
   \UNTIL{$|S| = N_S$}
\end{algorithmic}
\end{algorithm}

\subsection{Objective}
Recall that our core ideas for addressing the diversity challenge of the reference-model-free batch selection are: (1) we should consider the representativeness of the selected subset as a whole, rather than evaluating data in a sample-wise manner; (2) we should eliminate the impact of inter-sample redundancy on the representativeness of the subset. Motivated by these, we define a new measure, which characterizes the group-wise orthogonalized representativeness of subset \(S\) with respect to \(B\) as:
\begin{equation}
\label{eq:representativeness-subset}
	r(S,B,\theta)=\max_{E\in \gE(g(S,\theta))} \sum_{e\in E}\sum_{u\in g(B,\theta)}e\cdot u
\end{equation}
Here $\gE(g(S,\theta)) := \{E = \{e_1,\ldots e_{|E|}\} | \forall e_i, e_j \in E, e_i\cdot e_j = \delta_{ij}, \mathrm{span}(E) = \mathrm{span}(g(S,\theta))\}$ denotes the set of all potential orthonormal bases for the subspace spanned by $g(S,\theta)$, where \(\delta_{ij}\) is the Kronecker delta, taking the value $1$ when \(i = j\) and $0$ otherwise, and $\mathrm{span}(\cdot)$ denotes the subspace spanned by all elements in a set. 

Generally, the design intuition of \cref{eq:representativeness-subset} involves removing inter-sample redundancy in subset $S$ through orthogonalization. When calculating the contribution of an element in the subset $S$ to $r$, we subtract the redundant components already presented by other elements in $S$ and only consider its unique part distinct from others. This prevents redundant components between elements from contributing duplicate values to the orthogonalized representativeness $r$. In contrast, in sample-wise selection, redundant elements contribute duplicate scores because they are scored individually and then directly added up. Therefore, based on \cref{eq:representativeness-subset}, we propose a new objective for online batch selection, aiming to choose a subset $S\subset B$ with the maximum orthogonalized representativeness $r(S,B,\theta)$, under a specified budget constraint $|S|\leq N_S$.
\begin{equation}
	\label{eq:objective}
	\arg\max_{S\subset B} r(S,B,\theta),\quad \st |S|\leq N_S.
\end{equation}

\begin{algorithm}[t]
   \caption{\method.}
   \label{alg:DivBS}
\begin{algorithmic}[1]
   \STATE {\bfseries Input:} Batch $B$, current model $f_{\theta}$, budget number $N_S$
   \STATE {\bfseries Output:} Selected mini batch $S$
   \STATE $S \leftarrow \emptyset$,  $E \leftarrow \emptyset$, $\mathrm{Sum} \leftarrow \sum_{u\in g(B,\theta)}u$
   \REPEAT
   {\STATE $d \leftarrow \arg\max_{d \in B} |g(d,\theta) \cdot \mathrm{Sum}|$ }
   \STATE $e \leftarrow \frac{g(d,\theta) - \sum_{e \in E} (e \cdot g(d,\theta)) e}{\|g(d,\theta) - \sum_{e \in E} (e \cdot g(d,\theta)) e\|}$
   \STATE $S \leftarrow S \cup \{d\}$
   \STATE $E \leftarrow E \cup \{e\}$
   \STATE $\mathrm{Sum} \leftarrow \mathrm{Sum} - (e\cdot\mathrm{Sum})e$
   \textcolor{gray}{\texttt{\textit{// Subtracting the orthogonal components of the already selected samples from $\mathrm{Sum}$.}}}
   \STATE $B \leftarrow B \setminus \{d\}$
   \UNTIL{$|S| = N_S$ or $\mathrm{Sum}=0$}
\end{algorithmic}
\end{algorithm}

\subsection{Optimization}

Directly optimizing \cref{eq:objective} encounters two immediate challenges: (1) directly traversing an infinite space of orthonormal bases is clearly impractical; (2) the candidate subset space with size \(O({N_B}^{N_S})\) is also challenging to efficiently solve. In the following, we present our development to address these two challenges.

\textbf{Simplify the optimization of the orthonormal basis $E \in \gE(g(S,\theta))$.} To tackle the first challenge, we introduce \cref{prop:basis}, which offers a simple computational form for $r(S,B,\theta)$ \emph{w.r.t.} \emph{any} orthonormal basis for $g(S,\theta)$. 

\begin{proposition}\label{prop:basis}
$\forall E \in \gE(g(S,\theta))$, we have 
$$\sqrt{|E|\sum_{e\in E} (e\cdot \sum_{u\in g(B,\theta)}u)^2 } = r(S,B,\theta).
$$
\end{proposition}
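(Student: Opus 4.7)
The plan is to reduce the statement to a computation about the orthogonal projection of the vector $v := \sum_{u \in g(B,\theta)} u$ onto the subspace $V := \mathrm{span}(g(S,\theta))$, and then recognize the right-hand maximum as the tight Cauchy--Schwarz bound. Let $P$ denote the orthogonal projection onto $V$ and set $k = |E|$ (every basis in $\gE(g(S,\theta))$ has the same cardinality $\dim V$).

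First I would handle the left-hand side. For any orthonormal basis $E = \{e_1,\dots,e_k\}$ of $V$, Parseval's identity gives
\begin{equation*}
\sum_{e \in E} (e \cdot v)^2 \;=\; \sum_{i=1}^{k} (e_i \cdot Pv)^2 \;=\; \|Pv\|^2,
\end{equation*}
which is independent of the chosen basis $E$. Hence the left-hand side equals $\sqrt{k}\,\|Pv\|$. This already shows the quantity in \cref{prop:basis} is well-defined, as it must be for the proposition to make sense.

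Next I would upper-bound the right-hand side. For any $E' \in \gE(g(S,\theta))$ with elements $e'_1,\dots,e'_k$, using $e'_i \cdot v = e'_i \cdot Pv$ and the Cauchy--Schwarz inequality in $\mathbb{R}^k$,
\begin{equation*}
\sum_{i=1}^{k} e'_i \cdot v \;=\; \sum_{i=1}^{k} e'_i \cdot Pv \;\leq\; \sqrt{k} \, \sqrt{\sum_{i=1}^{k}(e'_i \cdot Pv)^2} \;=\; \sqrt{k}\,\|Pv\|.
\end{equation*}
So the max over $E' \in \gE(g(S,\theta))$ is at most $\sqrt{k}\,\|Pv\|$, matching the left-hand side.

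The part that needs actual work is showing this upper bound is attained, so that the maximum really equals $\sqrt{k}\,\|Pv\|$. Cauchy--Schwarz is tight exactly when all the inner products $e'_i \cdot Pv$ are equal, specifically equal to $\|Pv\|/\sqrt{k}$. I would construct such a basis explicitly: set $w = Pv/\|Pv\|$ (assuming $Pv \neq 0$; the case $Pv=0$ makes both sides zero trivially), take an orthonormal basis $\{f_1,\dots,f_{k-1}\}$ of the orthogonal complement of $w$ inside $V$, and let $e'_i = \tfrac{1}{\sqrt{k}}\,w + t_i$ where $t_1,\dots,t_k$ are chosen as the vertices of a regular simplex in $\mathrm{span}(f_1,\dots,f_{k-1})$ scaled so that $\|t_i\|^2 = (k-1)/k$ and $t_i \cdot t_j = -1/k$ for $i\neq j$. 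A direct check shows $\{e'_i\}$ is an orthonormal basis of $V$ with $e'_i \cdot Pv = \|Pv\|/\sqrt{k}$, so the bound is saturated. Combining with the upper bound and the computation of the left-hand side closes the proof.
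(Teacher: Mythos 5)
Your proof is correct, and at its core it takes the same route as the paper's: both rewrite $\sum_{e\in E}\sum_{u}e\cdot u$ as $\bigl(\sum_{e\in E}e\bigr)\cdot Pv$ with $v=\sum_{u\in g(B,\theta)}u$ and $P$ the projection onto $\mathrm{span}(g(S,\theta))$, identify $\sqrt{\sum_{e}(e\cdot v)^2}=\|Pv\|$ via Parseval, note $\|\sum_{e\in E}e\|=\sqrt{|E|}$, and conclude the maximum is $\sqrt{|E|}\,\|Pv\|$. The one substantive difference is at the attainment step: the paper simply asserts that $\sum_{e\in E}e$ ``can take any direction as $E$ varies'' and hence can be aligned with $Pv$, whereas you actually prove this by exhibiting the optimal basis $e'_i=\tfrac{1}{\sqrt{k}}w+t_i$ built from regular-simplex vertices $t_i$ in the orthogonal complement of $w=Pv/\|Pv\|$ inside the subspace (and your Gram-matrix conditions $\|t_i\|^2=(k-1)/k$, $t_i\cdot t_j=-1/k$ do check out, since the resulting Gram matrix $I-\tfrac{1}{k}J$ is PSD of rank $k-1$). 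So your argument is a rigorous completion of the paper's sketch rather than a different method; the Cauchy--Schwarz upper bound plus explicit saturation is cleaner and closes the only real gap in the published proof.
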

The computational form for $r(S,B,\theta)$ in \cref{prop:basis} enables us to freely choose the orthonormal basis without affecting the optimization of the overall objective. Therefore, we no longer need to consider the optimization of $E \in \gE(g(S,\theta))$ and can directly optimize 
$S$ to maximize $r(S,B,\theta)$. Please refer to \cref{appendix:Proof of prop:basis} for the proof.

\begin{table}[t]
\centering
\caption{Final accuracy and wall-clock time of one epoch training of different methods on CIFAR-10 with 10\% budget.}
 \vskip 0.05in
\label{tab:DivBS-greedy}
\resizebox{0.99\linewidth}{!}{
\begin{tabular}{c|c|ccc}
\toprule
 Method    & Full    & Uniform & \cref{alg:greedy} & \cref{alg:DivBS}   \\
 \midrule\midrule
Acc $\uparrow$  & 95.50\% & 92.06\% & 94.57\%     & 94.65\% \\
Time $\downarrow$ & 44.7s   & 13.2s   & 19.8s       & 13.9s  \\\bottomrule
\end{tabular}}
\end{table}

\textbf{Greedy algorithm and its approximation guarantee.} Moreover, we seek to streamline the optimization for subset $S\subset B$. A straightforward approach involves sequentially and greedily selecting data that maximizes $r(S,B,\theta)$ according to the computational form in \cref{prop:basis}, as depicted in \cref{alg:greedy}. 

To analyze the theoretical guarantee of \cref{alg:greedy} for the optimization problem in \cref{eq:objective}, we introduce an auxiliary function $r'(S,B,\theta) = \sqrt{\sum_{e\in E} (e\cdot \sum_{u\in g(B,\theta)}u)^2 }$, $\forall E \in \mathcal{E}(g(S,\theta))$, which removes the $|E|$ term from $r(S,B,\theta)$ and also remains constant as $E \in \mathcal{E}(g(S,\theta))$ changes. In the following, we show \cref{prop:r'}, which provides some favorable properties of $r'(S,B,\theta)$.

\begin{proposition}
\label{prop:r'}
Define $r'$ as $r'(S,B,\theta) = \sqrt{\sum_{e\in E} (e\cdot \sum_{u\in g(B,\theta)}u)^2 }$, $\forall E \in \gE(g(S,\theta))$.  $r'(S,B,\theta)$ is submodular~\citep{DBLP:journals/ftml/Bach13}, normalized, and monotone (referring to \cref{definition:submodular} for detailed definition).
\end{proposition}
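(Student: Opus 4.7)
My plan is to leverage the basis-independence established in Proposition~\ref{prop:basis} so that every property can be proved in intrinsic geometric language. Setting $v := \sum_{u\in g(B,\theta)}u$ and $W_S := \mathrm{span}(g(S,\theta))$, the quantity $\sum_{e\in E}(e\cdot v)^2$ is the squared length of the orthogonal projection of $v$ onto $W_S$, regardless of the orthonormal basis $E$ chosen for $W_S$. This gives the compact reformulation $r'(S,B,\theta) = \|P_{W_S}v\|$, with $P_{W_S}$ the orthogonal projector onto $W_S$, and reduces all three claims to statements about orthogonal projections of a fixed vector onto a nested family of subspaces.

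From this reformulation, normalization is immediate since $W_\emptyset = \{0\}$ forces $r'(\emptyset,B,\theta)=0$. Monotonicity follows because $S \subseteq T$ implies $W_S \subseteq W_T$, so the Pythagorean decomposition $P_{W_T}v = P_{W_S}v + (P_{W_T}-P_{W_S})v$ (with orthogonal summands) yields $\|P_{W_T}v\| \ge \|P_{W_S}v\|$. For submodularity I would compute the marginal gain explicitly: writing $g_d^{\perp S} := g(d,\theta)-P_{W_S}g(d,\theta)$ for the component of $g(d,\theta)$ outside $W_S$, a one-step Gram--Schmidt calculation gives
\begin{equation*}
\|P_{W_{S\cup\{d\}}}v\|^2-\|P_{W_S}v\|^2 \;=\; \frac{\bigl(g_d^{\perp S}\cdot v\bigr)^2}{\|g_d^{\perp S}\|^2},
\end{equation*}
with an identical formula at $T$ using $g_d^{\perp T}$ in place of $g_d^{\perp S}$. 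Comparing the two via the further decomposition $g_d^{\perp S} = g_d^{\perp T} + (P_{W_T}-P_{W_S})g(d,\theta)$, and then lifting the squared-norm inequality to $r'$ itself through the concavity of $\sqrt{\cdot}$, is the skeleton of the diminishing-returns argument.

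The main obstacle is precisely this submodular step. Normalization and monotonicity are essentially free once the projection rewriting is in place, but the marginal-gain formula above has both the numerator and the denominator shifting as $S$ enlarges to $T$: the orthogonal component $g_d^{\perp T}$ is shorter than $g_d^{\perp S}$, which inflates the marginal at $T$, while the inner product $g_d^{\perp T}\cdot v$ may be smaller than $g_d^{\perp S}\cdot v$, which deflates it. Making these two effects resolve in the right direction is the crux, and I expect the cleanest route is a Cauchy--Schwarz bound exploiting that $(P_{W_T}-P_{W_S})g(d,\theta)$ lies entirely inside the part of $W_T$ already captured by $P_{W_T}v$, so that the ``extra'' contribution of $g_d^{\perp S}$ at the smaller set $S$ has already been absorbed into $\|P_{W_T}v\|$ at the larger set $T$, after which the square-root concavity step closes the inequality.
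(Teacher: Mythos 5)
Your reformulation $r'(S,B,\theta)=\|P_{W_S}v\|$ with $v=\sum_{u\in g(B,\theta)}u$ is the same underlying picture as the paper's (which works with explicit Gram--Schmidt bases rather than projectors), and normalization, monotonicity, and your marginal-gain identity $\|P_{W_{S\cup\{d\}}}v\|^2-\|P_{W_S}v\|^2=(g_d^{\perp S}\cdot v)^2/\|g_d^{\perp S}\|^2$ are all correct. The problem is exactly the step you flag as the crux and leave open: you never prove that the (squared) marginal at $S$ dominates that at $T\supseteq S$, and no Cauchy--Schwarz argument will deliver it, because the inequality is false. Take $v=(1,0)$, $g(j,\theta)=(0,1)$, $g(k,\theta)=(\epsilon,1)$, $S=\emptyset$, $T=\{j\}$ (complete $B$ with a third point $m$ so that $g(j,\theta)+g(k,\theta)+g(m,\theta)=v$). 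Then $r'(S\cup\{k\})-r'(S)=\epsilon/\sqrt{1+\epsilon^2}$ while $r'(T\cup\{k\})-r'(T)=1-0=1$: the renormalization by $\|g_d^{\perp T}\|<\|g_d^{\perp S}\|$ inflates the later marginal faster than the shrinking inner product deflates it, which is precisely the tension you describe. So the gap in your proposal is not a missing trick; the diminishing-returns inequality for $\|P_{W_S}v\|$ fails outright.

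For comparison, the paper's own case analysis does not survive this example either: in its case (2b) it orthogonalizes $e_k$ against $E\cup\{e_j\}$ and then asserts $E\cup\{e_k\}\in\gE(g(S\cup\{k\},\theta))$, but that set spans the wrong subspace whenever $g(k,\theta)$ has a nonzero component along $e_j$ outside $\mathrm{span}(E)$; in the example above it yields $e_k=(1,0)$ and the incorrect value $r'(\{k\})=1$ in place of $\epsilon/\sqrt{1+\epsilon^2}$, and only with that substitution does the $\hat{\beta}(x)=\sqrt{a+x}-\sqrt{x}$ monotonicity argument close. Your write-up is more honest about where the difficulty sits, but as it stands neither your sketch nor the paper establishes submodularity of $r'$; one would have to modify the set function (for instance by fixing a set-independent normalization of the candidate directions) and then re-derive the $1-e^{-1}$ guarantee of \cref{prop:r guarantee} for the modified objective.
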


As per \cref{prop:r'}, \(r'(S,B,\theta)\) is submodular, normalized, and monotone for \(S\subset B\). Considering that \cref{alg:greedy} is also a greedy algorithm for maximizing \(r'(S,B,\theta)\), we can establish that \cref{alg:greedy} has a $1-e^{-1}$ approximation ratio \emph{w.r.t.} the optimal objective
value for maximizing \(r'(S,B,\theta)\)~\citep{DBLP:journals/mp/NemhauserWF78} (also referring to \cref{lemma:r' ratio} in Appendix). 

Furthermore, by establishing the connection between the optimal solutions for \(r(S,B,\theta)\) and \(r'(S,B,\theta)\), along with the solutions output by \cref{alg:greedy}, and their corresponding objective values, we present \cref{prop:r guarantee}. 

\begin{proposition}
\label{prop:r guarantee}
\cref{alg:greedy} returns a $1-e^{-1}$ approximation for $\arg\max_{S\subset B} r(S,B,\theta), \st |S|\leq N_S$. That is, denote $S^*$ as the optimum subset for \cref{eq:objective}, and $S'$ as the output of \cref{alg:greedy}, we have $$r(S',B,\theta) \geq (1-e^{-1}) r(S^*,B,\theta).$$
\end{proposition}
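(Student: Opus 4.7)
The plan is to reduce the guarantee for $r$ to the classical submodular greedy bound for $r'$ via the bridge relation in \cref{prop:basis}, namely $r(S,B,\theta) = \sqrt{|E|}\, r'(S,B,\theta)$ where $|E| = \dim(\mathrm{span}(g(S,\theta)))$.

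First, I would identify \cref{alg:greedy} as exactly the greedy algorithm for maximizing $r'$ subject to $|S|\leq N_S$. Writing $\mathrm{Sum} = \sum_{u\in g(B,\theta)} u$ and taking $e_d$ to be the unit orthogonal component of $g(d,\theta)$ relative to $\mathrm{span}(g(S,\theta))$, the identity
\[
r'(S\cup\{d\},B,\theta)^2 \;=\; r'(S,B,\theta)^2 + (e_d\cdot\mathrm{Sum})^2
\]
shows that the step $\arg\max_{d} |e_d\cdot\mathrm{Sum}|$ used by \cref{alg:greedy} is exactly the one maximizing the marginal gain of $r'$. Invoking \cref{prop:r'} (submodular, normalized, monotone) together with Nemhauser--Wolsey--Fisher (\cref{lemma:r' ratio}) then gives $r'(S',B,\theta)\geq (1-e^{-1})\,r'(\tilde S,B,\theta)$ for the $r'$-optimum $\tilde S$; and since $r'(\tilde S,B,\theta)\ge r'(S^*,B,\theta)$ by optimality of $\tilde S$, we obtain $r'(S',B,\theta)\geq (1-e^{-1})\,r'(S^*,B,\theta)$.

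Next, I would compare the subspace dimensions $|E'|=\dim(\mathrm{span}(g(S',\theta)))$ and $|E^*|=\dim(\mathrm{span}(g(S^*,\theta)))$. \cref{alg:greedy} never gains by picking a linearly dependent sample: if $g(d,\theta)\in\mathrm{span}(g(S,\theta))$ then $e_d=0$ and its score is $0$, whereas any rank-increasing candidate scores non-negatively. With the natural tie-breaking that prefers rank-increasing samples (equivalently, treating linearly dependent candidates as ineligible), one gets $|E'|=\min(N_S,\dim(\mathrm{span}(g(B,\theta))))$, and since $|E^*|\leq\min(|S^*|,\dim(\mathrm{span}(g(B,\theta))))\leq |E'|$ we conclude $|E'|\geq|E^*|$. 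Substituting into \cref{prop:basis} then yields
\[
r(S',B,\theta)=\sqrt{|E'|}\,r'(S',B,\theta)\geq\sqrt{|E^*|}\,(1-e^{-1})\,r'(S^*,B,\theta)=(1-e^{-1})\,r(S^*,B,\theta),
\]
which is the desired bound.

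The step I expect to take most care is the rank comparison. In corner cases where $\mathrm{Sum}$ already lies in $\mathrm{span}(g(S,\theta))$ partway through the greedy run, all remaining marginal $r'$-gains vanish and the $\arg\max$ is ambiguous; pinning down $|E'|\geq|E^*|$ in this degenerate regime requires a concrete tie-breaking convention (or an explicit exclusion of linearly dependent candidates from the candidate set). Once that is fixed, the remainder is a direct composition of \cref{prop:basis} with the submodular greedy guarantee applied to $r'$.
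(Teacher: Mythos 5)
Your proposal is correct and follows essentially the same route as the paper: reduce to the auxiliary function $r'$ via $r=\sqrt{|E|}\,r'$ from \cref{prop:basis}, apply the Nemhauser--Wolsey--Fisher guarantee using \cref{prop:r'}, and then compare the subspace dimensions of the greedy and optimal subsets. Your rank step is in fact slightly more careful than the paper's (which asserts $|E^*|=|E^{**}|=|E'|=\min(\mathrm{rank}(B),N_S)$ via a brief swap argument), since you only require $|E'|\geq|E^*|$ and explicitly flag the degenerate tie-breaking needed when all marginal $r'$-gains vanish.
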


\cref{prop:r guarantee} provides the theoretical guarantee for \cref{alg:greedy} regarding the optimization problem defined in \cref{eq:objective} with an approximation ratio of $1-e^{-1}$. This enables us to effectively transform a subset optimization problem into a sequential selection problem. Please refer to \cref{appendix:Proof of prop:r guarantee} for the detailed proof.

\subsection{Realization}
\textbf{More Efficient Selection Process.} In \cref{alg:greedy}, the operation in line 5 involves subtracting the corresponding components of $E$ (an orthonormal basis of already selected samples) in all elements of $U = g(B,\theta)$, and then normalizing them. We further simplify this step in \cref{alg:DivBS} by only subtracting the orthogonal components on the $\mathrm{Sum}$ term, as shown in line 9. Please refer to \cref{appendix:plain-text for algs} for a detailed plain-text description of \cref{alg:greedy,alg:DivBS}. Here, we make an approximation by neglecting the normalization operation in \cref{alg:greedy}'s line 5; refer to \cref{appendix:Approximation from alg:greedy to alg:DivBS} for details. Basically, we simplify the process of orthogonalization of $|B|$ elements (line 5 of \cref{alg:greedy}) w.r.t. $E$ to the orthogonalization of only $1$ element and a subtraction operation (line 6 and 9 of \cref{alg:DivBS}), considerably reducing the cost of selection.
Empirically, we observed that \cref{alg:DivBS} substantially reduces the selection time compared to \cref{alg:greedy} while achieving comparable performance, as shown in \cref{tab:DivBS-greedy}.

\begin{table*}[t]
\centering
\caption{Final accuracies ($\uparrow$) of \methodspace and  various baseline methods on CIFAR-10, CIFAR-100 and Tiny ImageNet with different budget ratio 10\%, 20\%, 30\%. Bold indicates the best results. Experiments show that \methodspace consistently outperforms all baselines.}
\label{tab: classification-main}
 \vskip 0.02in
\resizebox{\textwidth}{!}{
\begin{tabular}{@{}!{\vrule width 0pt}p{2.8cm}<{\raggedright}|p{1.15cm}<{\centering}p{1.15cm}<{\centering}p{1.15cm}<{\centering}|p{1.15cm}<{\centering}p{1.15cm}<{\centering}p{1.15cm}<{\centering}|p{1.15cm}<{\centering}p{1.15cm}<{\centering}p{1.15cm}<{\centering}p{1.15cm}<{\centering}@{}}
\toprule
Method                 & \multicolumn{3}{c|}{CIFAR-10} & \multicolumn{3}{c|}{CIFAR-100} & \multicolumn{3}{c}{Tiny ImageNet} \\
\midrule\midrule
Full Data Training & \multicolumn{3}{c|}{95.50\%}         & \multicolumn{3}{c|}{77.28\%}          & \multicolumn{3}{c}{56.76\%}            \\
Budget ratio                & 10\%          & 20\%     & 30\%    & 10\%          & 20\%          & 30\%            & 10\%           & 20\%            & 30\%           \\
\midrule
Uniform & 92.06\% & 93.76\% & 94.61\% & 70.61\% & 74.18\% & 75.98\% & 48.36\% & 51.71\% & 53.76\% \\
Train Loss & 92.73\% & 93.87\% & 94.54\% & 65.12\% & 69.34\% & 72.62\% & 37.12\% & 45.23\% & 47.72\% \\
Grad Norm & 65.23\% & 76.23\% & 82.34\% & 64.72\% & 69.23\% & 72.34\% & 37.24\% & 44.34\% & 48.24\% \\
Grad Norm IS & 92.51\% & 93.78\% & 94.41\% & 69.34\% & 72.71\% & 73.21\% & 42.79\% & 47.34\% & 50.23\% \\
SVP & 57.38\% & 73.87\% & 82.34\% & 31.23\% & 43.35\% & 50.73\% & 19.34\% & 28.97\% & 34.24\% \\
Moderate-BS & 92.32\% & 93.57\% & 94.36\% & 70.21\% & 74.35\% & 75.34\% & 48.92\% & 51.36\% & 54.23\% \\
CCS-BS & 92.61\% & 93.88\% & 94.81\% & 71.11\% & 74.42\% & 76.21\% & 49.18\% & 52.43\% & 54.17\% \\
\rowcolor[HTML]{EFEFEF} 
\textbf{\method} & \textbf{94.65\%} & \textbf{94.83\%} & \textbf{95.07\%} & \textbf{73.11\%} & \textbf{76.10\%} & \textbf{77.21\%} & \textbf{50.84\%} & \textbf{55.03\%} & \textbf{55.94\%} \\
\bottomrule         
\end{tabular}
}
\vspace{-0.1in}
\end{table*}

\begin{table*}[t]
\centering
\caption{Epochs ($\downarrow$) required for \methodspace and various baseline methods to reach given target test accuracies on CIFAR-100 with different budget ratio 10\%, 20\%, 30\%. 
The target accuracies are set at 80\% and 90\% of the full dataset training accuracy (77.28\%), equivalent to 62\% and 69\%, respectively. \emph{NR} indicates that the target accuracy is not reached. Bold indicates the best results.}
 \vskip 0.02in
\label{tab:cifat100-epochs}
\resizebox{0.99\textwidth}{!}{
\begin{tabular}{@{}p{1.4cm}<{\centering}p{1.7cm}<{\centering}|p{1.3cm}<{\centering}cccp{1.3cm}<{\centering}cp{1.3cm}>
{\columncolor[HTML]{EFEFEF}\bfseries}p{1.3cm}<{\centering}!{\vrule width 0pt} @{}}
\toprule
Budget & Target Acc        & Uniform & Train Loss & Grad Norm & Grad Norm IS & SVP & Moderate-BS & CCS-BS & \method \\\midrule\midrule
\multirow{2}{*}{10\%}      & 62\% & 150 & 172 & 174 & 163 & \emph{NR} & 153 & 152 & 132 \\
                           &   69\%                &177 & \emph{NR} & \emph{NR} & 196 & \emph{NR} & 179 & 174 & 165 \\\midrule
\multirow{2}{*}{20\%}      & 62\% & 118 & 153 & 155 & 143 & \emph{NR} & 123 & 118 & 83 \\
                           &   69\%                &148 & 195 & 194 & 182 & \emph{NR} & 150 & 147 & 130 \\\midrule
\multirow{2}{*}{30\%}      & 62\%& 113 & 147 & 149 & 133 & \emph{NR} & 111 & 114 & 52 \\
                           &   69\%                & 146 & 175 & 174 & 170 & \emph{NR} & 148 & 145 & 111 \\\bottomrule 
\end{tabular}
}
\end{table*}

\textbf{Choice of selection features $U$.} For the features \(U = \{g(d_i,\theta)\}_{i=1}^{N_B}\) used in the selection, we leverage the gradient of each sample at the final layer. This choice is motivated by the following reasons: (1) Gradients directly capture the influence of data on model training; (2) Gradients are applicable to various supervised and unsupervised tasks without relying on specific task requirements or annotations; (3) The overhead of computing gradients at the final layer is negligible compared to the overall training cost of the batch.

\subsection{Discussion}
Existing reference-model-free batch selection methods independently score and select data in a sample-wise manner. Consequently, they cannot avoid selecting highly scored but mutually redundant samples, leading to a lack of diversity. 
In contrast, our \methodspace evaluates the selected subset as a whole and eliminates the impact of inter-sample redundancy, ensuring the diversity of the selected samples. Additionally, through theoretical analysis and approximation of the objective function, we provide an efficient selection process.

\section{Experiments}
\subsection{Experimental Setup}\label{sec:Experimental Setup}
\textbf{Datasets.} We conduct experiments to evaluate our \methodspace on CIFAR-10~\citep{krizhevsky2009learning}, CIFAR-100~\citep{krizhevsky2009learning}, and Tiny ImageNet~\citep{Le2015TinyIV} for image classification. We then conduct experiments in the context of class imabalance, specifically the CIFAR-100-LT dataset with imbalance ratio 100,  obtained through exponential sampling~\citep{DBLP:conf/cvpr/CuiJLSB19}. We further conduct experiments on more tasks, including semantic segmentation, cross-modal retrieval, and fine-tuning language models, using datasets PASCAL VOC 2012 trainaug~\citep{DBLP:conf/eccv/ChenZPSA18}, Wikipedia~\citep{DBLP:conf/mm/RasiwasiaPCDLLV10,DBLP:conf/cvpr/00020ZZ021}, and E2E NLG Challenge~\citep{DBLP:conf/sigdial/NovikovaDR17}.

\textbf{Baselines.} In addition to uniform sampling, we compared our \methodspace with various reference-model-free baseline methods, including training loss~\citep{DBLP:conf/aistats/KawaguchiL20}, gradient norm~\citep{DBLP:conf/icml/KatharopoulosF18}, gradient norm with importance sampling (gradient norm IS)~\citep{DBLP:conf/icml/KatharopoulosF18}, and Selection-via-Proxy (SVP)~\citep{DBLP:conf/iclr/ColemanYMMBLLZ20}. Additionally, we applied the selection strategies of Moderate~\citep{DBLP:conf/iclr/XiaL0S0L23} and CCS~\citep{DBLP:conf/iclr/ZhengLL023} in the online batch selection paradigm, referred to as Moderate-BS and CCS-BS. Both strategies have demonstrated superior performance in one-shot coreset selection with high pruning rates~\citep{DBLP:conf/iclr/ZhengLL023}.

\textbf{Implementation details.} For image classification, we use 18-layer ResNet as the backbone. The standard data augmentations
are applied as in \citet{DBLP:conf/nips/CubukZS020}. Models are trained
using SGD with momentum of 0.9 and weight decay of 0.005 as the optimizer. The initial learning rate is set to 0.1. We train the model for 200 epochs with the cosine learning-rate scheduling. Following \citet{pmlr-v162-mindermann22a,deng2023towards}, we set the budget of batch sample number as $N_S = 32$, and the budget ratio as $\frac{N_S}{N_B} = 10\%$ unless specified otherwise. Our implementations for semantic segmentation, cross-modal retrieval, and language model fine-tuning are aligned with the details in \citet{DBLP:conf/eccv/ChenZPSA18}, \citet{DBLP:conf/cvpr/00020ZZ021}, and \citet{DBLP:conf/iclr/HuSWALWWC22}, respectively.

\subsection{Performance Evaluation on Image Classification}\label{sec:exp:image classification}
We first empirically evaluate \methodspace on CIFAR-10, CIFAR-100 and Tiny ImageNet. We report the final accuracy of different methods with budget ratio $\frac{N_S}{N_B} = \{10\%,20\%,30\%\}$ in \cref{tab: classification-main}. We can observe that \methodspace significantly outperforms all baselines under different budgets across the three datasets. Furthermore, it is notable that no baseline consistently outperforms uniform sampling. Techniques like Train Loss and Grad Norm, which focus on selecting challenging samples, may exhibit decent performance in certain CIFAR-10 scenarios but suffer significant performance drops on more intricate datasets such as CIFAR-100 and Tiny ImageNet. Even with Moderate-BS and CCS-BS, which employ strategies like selecting samples with intermediate or diverse metrics, achieving results comparable to uniform sampling, they still notably lag behind \method. 

Moreover, \cref{tab:cifat100-epochs} illustrates the number of epochs needed to reach some given target accuracies. We can observe that some baselines like Grad Norm IS and CCS-BS that achieve slightly higher final accuracy compared to uniform sampling in \cref{tab: classification-main}, do not gain an advantage in terms of convergence speed.
Remarkably, \methodspace not only effectively boosts the final accuracy but also expedites the model's convergence.

\begin{table}[t]
\centering
\caption{Final accuracies of \methodspace and baselines and epochs required to reach given target accuracies on CIFAR-100-LT (imbalance ratio 100). The target accuracies are set at 60\% and 80\% of the full dataset training accuracy (42.80\%), equivalent to 26\% and 35\%, respectively. \emph{NR} indicates that the target Acc is not reached.
}
\label{tab:cifar-lt}
 \vskip 0.05in
\resizebox{0.99\linewidth}{!}{
\begin{tabular}{@{}cc|ccc!{\vrule width 0pt}@{}}
\toprule
\multicolumn{5}{c}{Full Data Training Acc: 42.80\%}                                                                         \\\midrule\midrule
\multirow{2}{*}{Budget} & \multirow{2}{*}{Method}  & \multirow{2}{*}{Final Acc $\uparrow$} & \multicolumn{2}{c}{Epochs to target Acc $\downarrow$} \\
                        &                          &                            &  26\%             &  35\%                  \\\midrule\midrule
\multirow{4}{*}{10\%}   & Uniform & 26.97\%   & 177 & \emph{NR} \\
&    Moderate-BS & 26.35\%   & 183 & \emph{NR} \\
&    CCS-BS& 27.43\%   & 178 & \emph{NR} \\
& \cellcolor[HTML]{EFEFEF}\textbf{\method}    & \cellcolor[HTML]{EFEFEF}\textbf{31.74\%}   & \cellcolor[HTML]{EFEFEF}\textbf{145} & \cellcolor[HTML]{EFEFEF}\emph{NR}\\\midrule
\multirow{4}{*}{20\%}   & Uniform & 38.50\%   & 95 & 144 \\
&    Moderate-BS   & 37.78\%   & 106 & 163 \\
&    CCS-BS  & 38.72\%   & 92 & 140 \\
& \cellcolor[HTML]{EFEFEF}\textbf{\method} & \cellcolor[HTML]{EFEFEF}\textbf{39.46\%}   & \cellcolor[HTML]{EFEFEF}\textbf{69} & \cellcolor[HTML]{EFEFEF}\textbf{118} \\\midrule
\multirow{4}{*}{30\%}   & Uniform                  & 39.67\%   & 54 & 113 \\
&    Moderate-BS   & 39.43\%   & 60 & 118 \\
&    CCS-BS  & 40.28\%   & 53 & 107 \\
& \cellcolor[HTML]{EFEFEF}\textbf{\method}  & \cellcolor[HTML]{EFEFEF}\textbf{42.41\%}   & \cellcolor[HTML]{EFEFEF}\textbf{48} & \cellcolor[HTML]{EFEFEF}\textbf{84} \\\bottomrule                 
\end{tabular}
}
\vskip -0.1in
\end{table}

\subsection{Performance Evaluation under Class Imbalance}\label{sec:exp:imb}
We further evaluate \methodspace on a more challenging task of imbalanced classification, a scenario often encountered in the real world~\citep{DBLP:conf/iclr/MenonJRJVK21,hong2023long,fan2022fedskip,fan2023federateda,FedLESAM}. Class imbalance poses a greater challenge to the diversity of data selection, as the already scarce samples from the tail classes are more likely to be completely absent in the selected subset.
Specifically, we experiment on CIFAR-100-LT (imbalance ratio 100)~\citep{DBLP:conf/cvpr/CuiJLSB19,DBLP:conf/icml/ZhouYWHZ22,DBLP:conf/nips/0002Y0ZHW23} with different budget ratios 10\%, 20\%, 30\%. We report the final accuracies and the numbers of epochs required to reach the target accuracies in \cref{tab:cifar-lt}.
\methodspace consistently outperforms baselines in both final performance and convergence speed, demonstrating its ability to ensure data diversity even under class imbalance.

\begin{table}[t]
\centering
\caption{Final mIoUs of \methodspace and various baseline methods and epochs required to reach given target mIoUs on  PASCAL VOC 2012 trainaug~\citep{DBLP:conf/eccv/ChenZPSA18} with different budget ratio 10\%, 20\%, 30\%. The target mIoUs are set at 80\% and 90\% of the full dataset training mIoU (70.80\%), equivalent to 57\% and 64\%, respectively. \emph{NR} indicates that the target accuracy is not reached.
}
 \vskip 0.05in
\label{tab:Segmentation}
\resizebox{0.99\linewidth}{!}{
\begin{tabular}{@{}cc|ccc!{\vrule width 0pt}@{}}
\toprule
\multicolumn{5}{c}{Full Data Training mIoU: 70.80\%}                                                                         \\\midrule\midrule
\multirow{2}{*}{Budget} & \multirow{2}{*}{Method}  & \multirow{2}{*}{Final mIoU $\uparrow$} & \multicolumn{2}{c}{Epochs to target mIoU $\downarrow$} \\
                        &                          &                            &  57\%             &  64\%                  \\\midrule\midrule
\multirow{4}{*}{10\%}   & Uniform & 63.72\%   & 20 & \emph{NR} \\
&    Moderate-BS & 63.27\%   & 23 & \emph{NR} \\
&    CCS-BS  & 63.98\%   & 21 & \emph{NR} \\
& \cellcolor[HTML]{EFEFEF}\textbf{\method}  & \cellcolor[HTML]{EFEFEF}\textbf{65.45\%}   & \cellcolor[HTML]{EFEFEF}\textbf{12} & \cellcolor[HTML]{EFEFEF}\textbf{38} \\\midrule
\multirow{4}{*}{20\%}   & Uniform & 67.07\%   & 8 & 25 \\
&    Moderate-BS & 66.83\%   & 10 & 33 \\
&    CCS-BS  & 67.22\%   & 8 & 26 \\
& \cellcolor[HTML]{EFEFEF}\textbf{\method} & \cellcolor[HTML]{EFEFEF}\textbf{68.13\%}   & \cellcolor[HTML]{EFEFEF}\textbf{7} & \cellcolor[HTML]{EFEFEF}\textbf{19} \\\midrule

\multirow{4}{*}{30\%}   & Uniform  & 68.56\%   & 8 & 19 \\
&    Moderate-BS & 68.34\%   & 9 & 22 \\
&    CCS-BS & 68.47\%   & 9 & 20 \\
& \cellcolor[HTML]{EFEFEF}\textbf{\method} & \cellcolor[HTML]{EFEFEF}\textbf{69.85\%}   & \cellcolor[HTML]{EFEFEF}\textbf{4} & \cellcolor[HTML]{EFEFEF}\textbf{13} \\\bottomrule                 
\end{tabular}
}
\vspace{-0.1in}
\end{table}

\begin{table}[t]
\centering
\caption{Mean Average Precision
(MAP) scores for both $\mathrm{image}\rightarrow\mathrm{text}$ and $\mathrm{text}\rightarrow\mathrm{image}$, along with their average on Wikipedia with different budget ratio 10\%, 20\%, 30\%.
Bold indicates the best results and \textcolor{darkred}{red} signifies improvements over full data training.}
 \vskip 0.05in
\label{tab:Retrieval}
\resizebox{0.99\linewidth}{!}{
\begin{tabular}{@{}cc|ccc!{\vrule width 0pt}@{}}
\toprule
Budget                & Methods & Img2Txt $\uparrow$ & Txt2Img $\uparrow$ & avg $\uparrow$ \\\midrule\midrule
100\%                 & Full Data Training    & 0.525   & 0.464   & 0.4945 \\\midrule
\multirow{2}{*}{10\%} & Uniform & 0.508   & 0.457   & 0.4825 \\
                      & \cellcolor[HTML]{EFEFEF}\textbf{\method}    & \cellcolor[HTML]{EFEFEF}\textbf{0.512}   & \cellcolor[HTML]{EFEFEF}\textbf{0.462}   & \cellcolor[HTML]{EFEFEF}\textbf{0.487}  \\\midrule
\multirow{2}{*}{20\%} & Uniform & 0.513   & 0.462   & 0.4875 \\
                      & \cellcolor[HTML]{EFEFEF}\textbf{\method}    & \cellcolor[HTML]{EFEFEF}\textbf{0.517}   & \cellcolor[HTML]{EFEFEF}\textbf{\textcolor{darkred}{0.467}}   & \cellcolor[HTML]{EFEFEF}\textbf{0.492}  \\\midrule
\multirow{2}{*}{30\%} & Uniform & 0.518   & 0.464   & 0.491  \\
                      & \cellcolor[HTML]{EFEFEF}\textbf{\method}    & \cellcolor[HTML]{EFEFEF}\textbf{0.522}   & \cellcolor[HTML]{EFEFEF}\textbf{\textcolor{darkred}{0.468}}   & \cellcolor[HTML]{EFEFEF}\textbf{\textcolor{darkred}{0.495}}
                      \\\bottomrule 
\end{tabular}
}
\end{table}

\begin{figure*}[t]
    \centering
    \vspace{0.2in}
\subfigure[KNN distance]{   
\begin{minipage}{0.31\textwidth}
\centering    
\vspace{-0.1in}
\label{fig:knn}
\includegraphics[width=\textwidth]{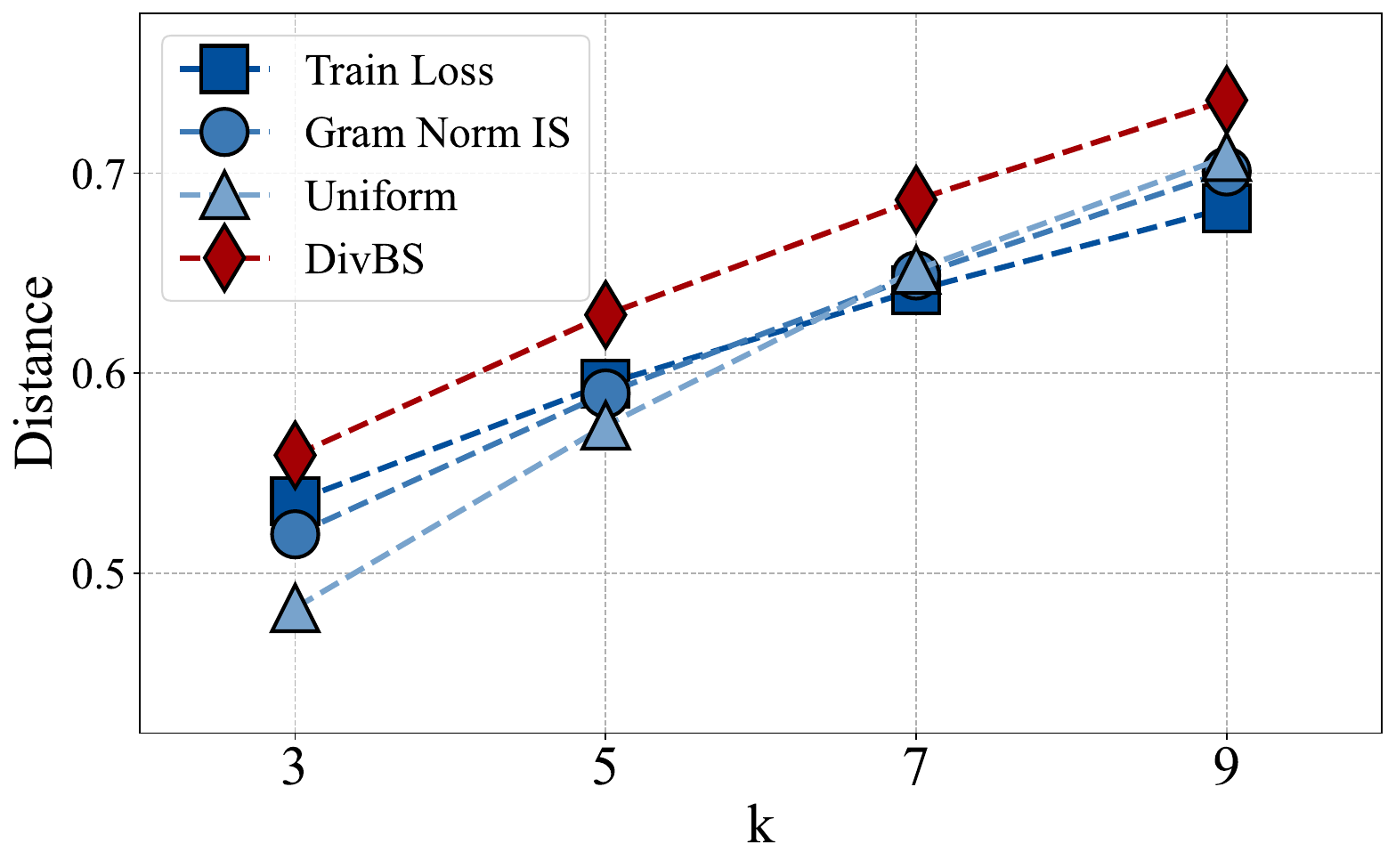}  
\end{minipage}}
\subfigure[Group Proportion]{   
\begin{minipage}{0.31\textwidth}
\centering    
\vspace{-0.1in}
\label{fig:property}
\includegraphics[width=\textwidth]{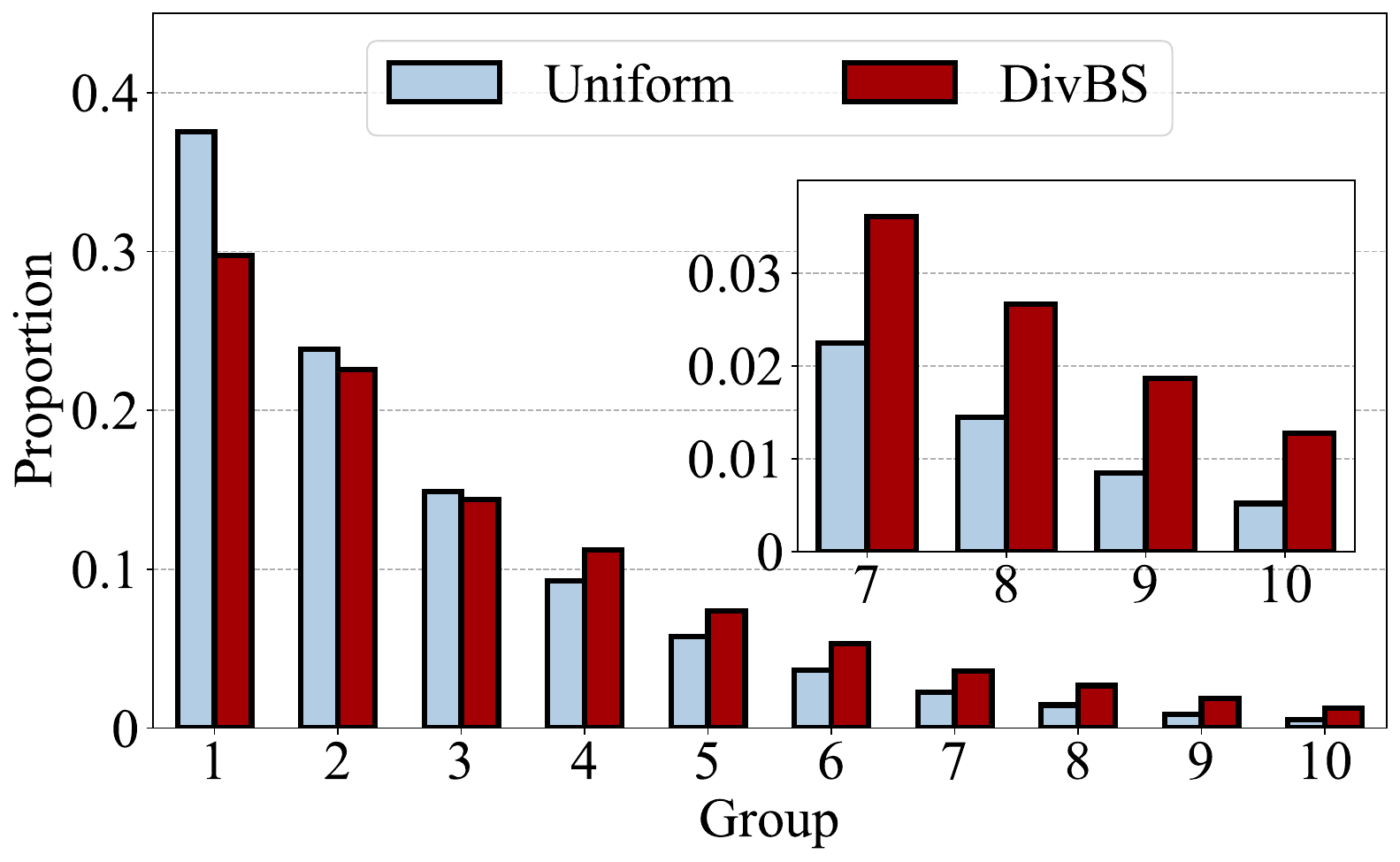}
\end{minipage}
}
\subfigure[CIFAR-10/100*]{   
\begin{minipage}{0.333\textwidth}
\centering   
\vspace{-0.17in}
\label{fig:cifar*}
\includegraphics[width=\textwidth]{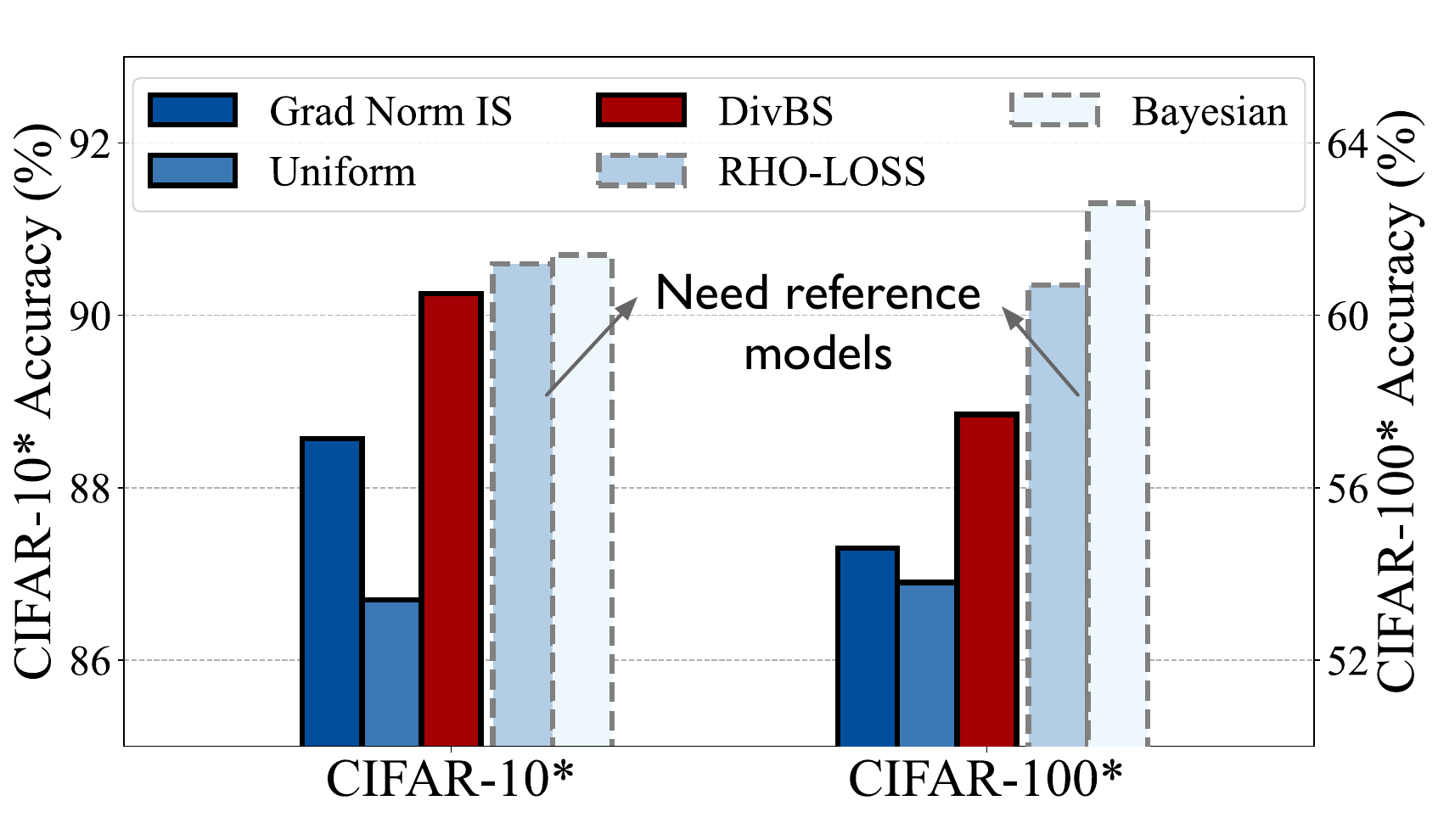}\vspace{1mm}
\end{minipage}
}
\caption{(a) Average mean feature cosine distance with the k-nearest neighbors for the selected data on CIFAR-10 (10\% budget). (b) Properties of 10 groups in the selected data on CIFAR-100-LT. (c) Performance comparison on CIFAR-10* and CIFAR-100*. Note that Beyesian and RHO-LOSS requring reference models and also introduce additional overhead from using auxiliary models for inference.}
\vspace{-0.05in}
\end{figure*}

\begin{figure*}[t]
 \vskip 0.2in
\centering 
\begin{minipage}{0.315\textwidth}
\centering   
\includegraphics[width=\textwidth]{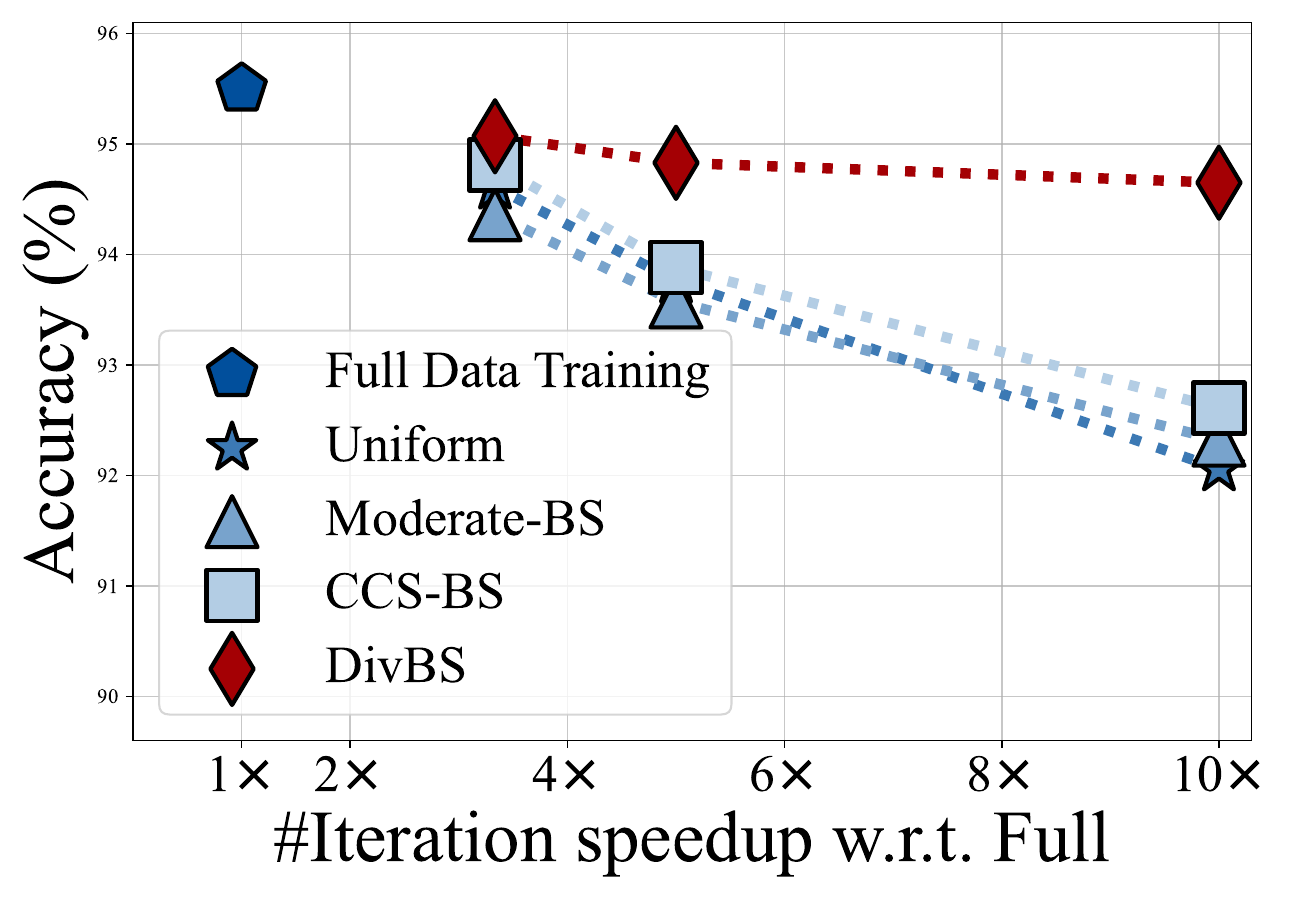}
\end{minipage}
\hspace{1mm}
\begin{minipage}{0.328\textwidth}
\centering   
\includegraphics[width=\textwidth]{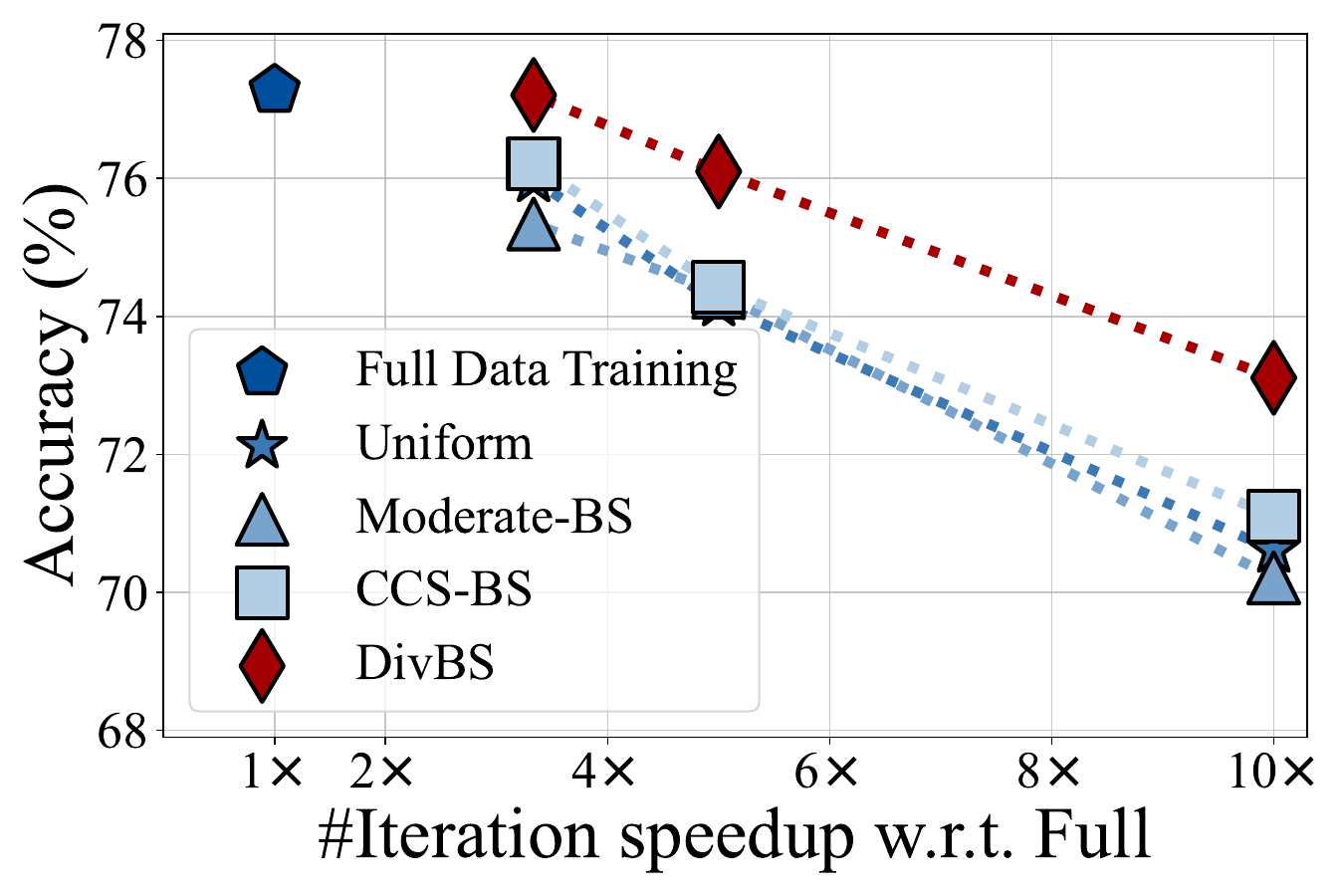}
\end{minipage}
\hspace{1mm}
\begin{minipage}{0.315\textwidth}
\centering   
\includegraphics[width=\textwidth]{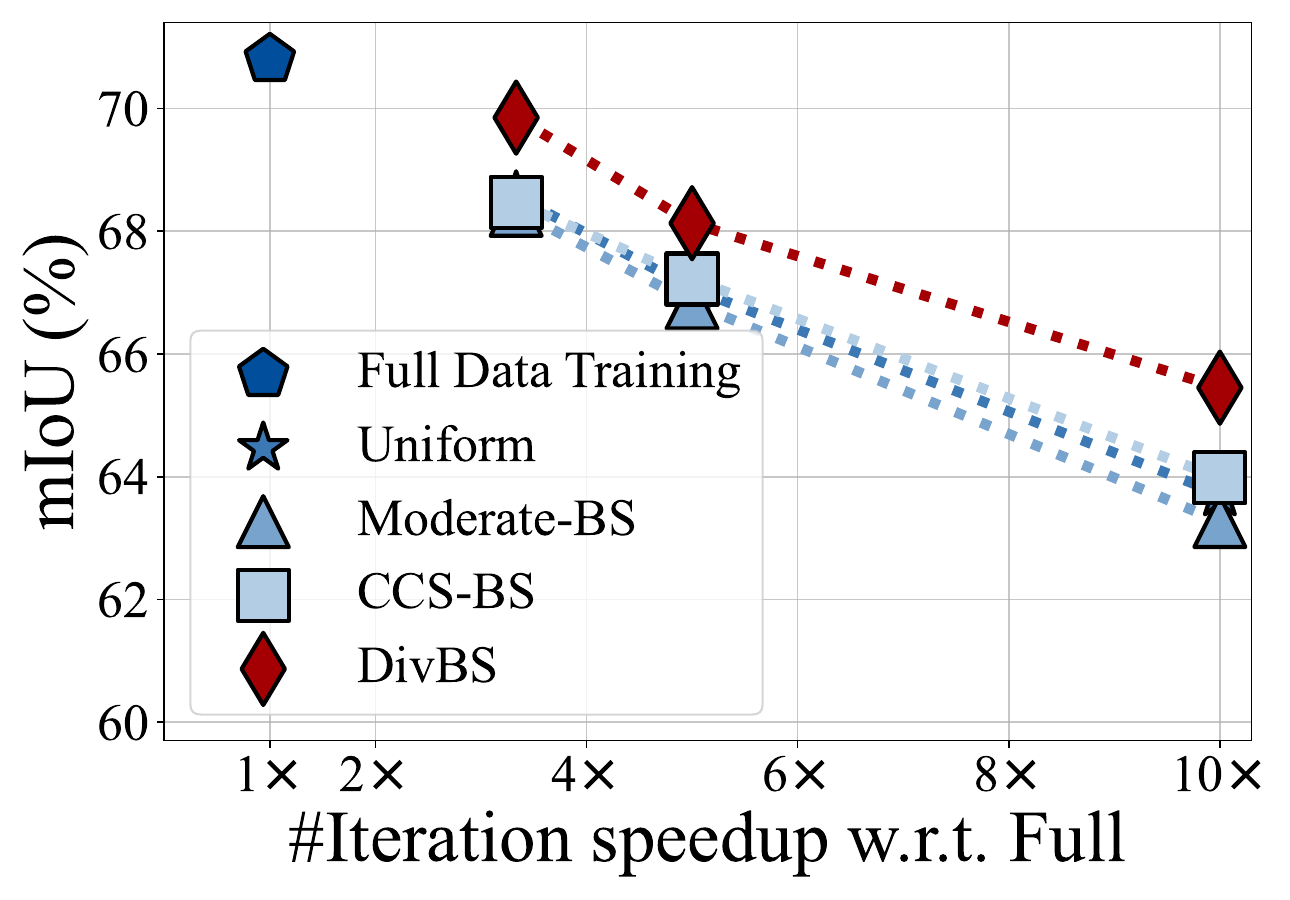}
\end{minipage}
\\
\subfigure[CIFAR-10]{  
\begin{minipage}{0.315\textwidth}
\centering    
\includegraphics[width=\textwidth]{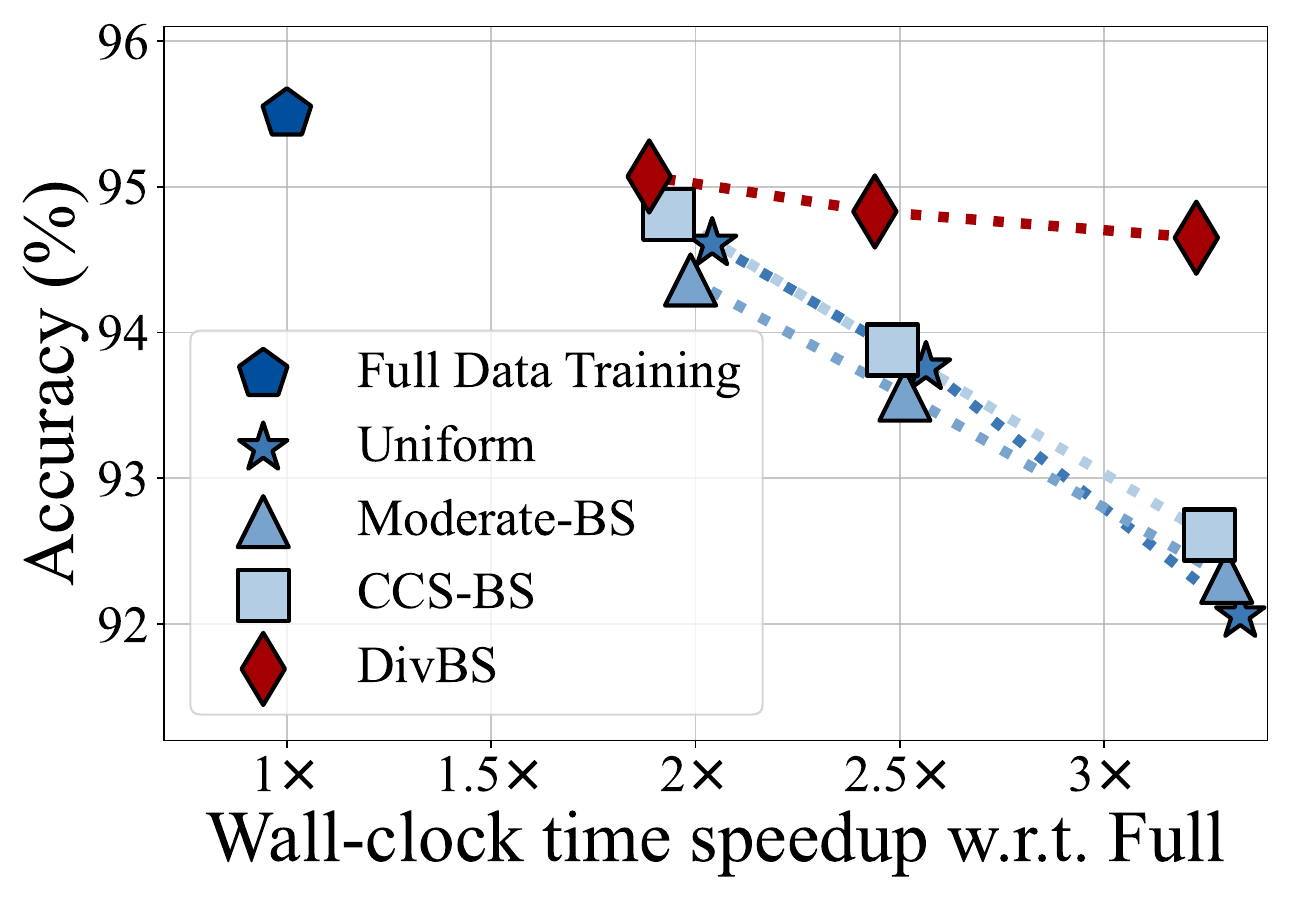}
\end{minipage}
}
\hfill
\subfigure[CIFAR-100]{
\begin{minipage}{0.32\textwidth}
\centering    
\includegraphics[width=\textwidth]{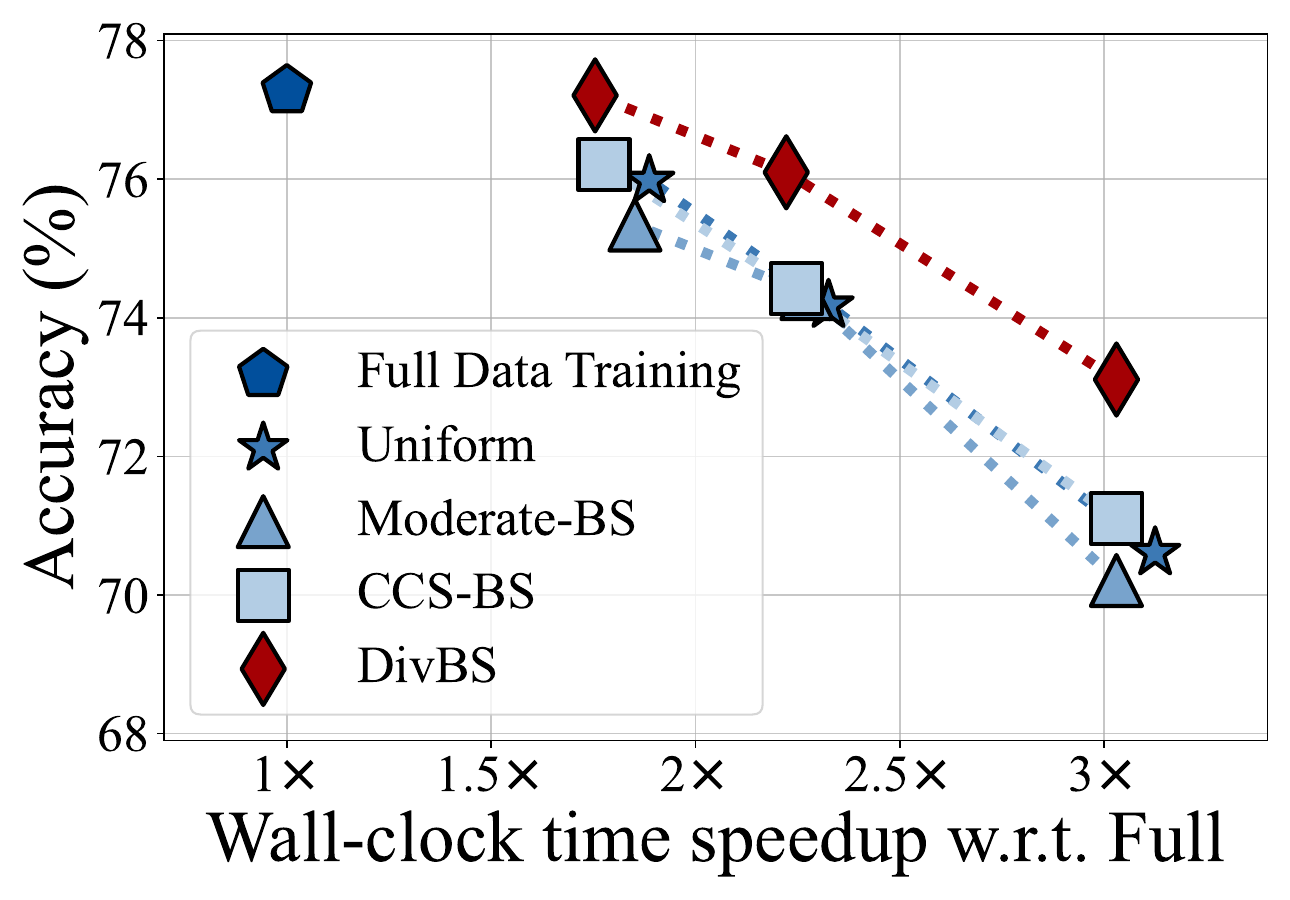}
\end{minipage}
}
\hfill
\subfigure[PASCAL VOC 2012 trainaug]{   
\begin{minipage}{0.315\textwidth}
\centering    
\includegraphics[width=\textwidth]{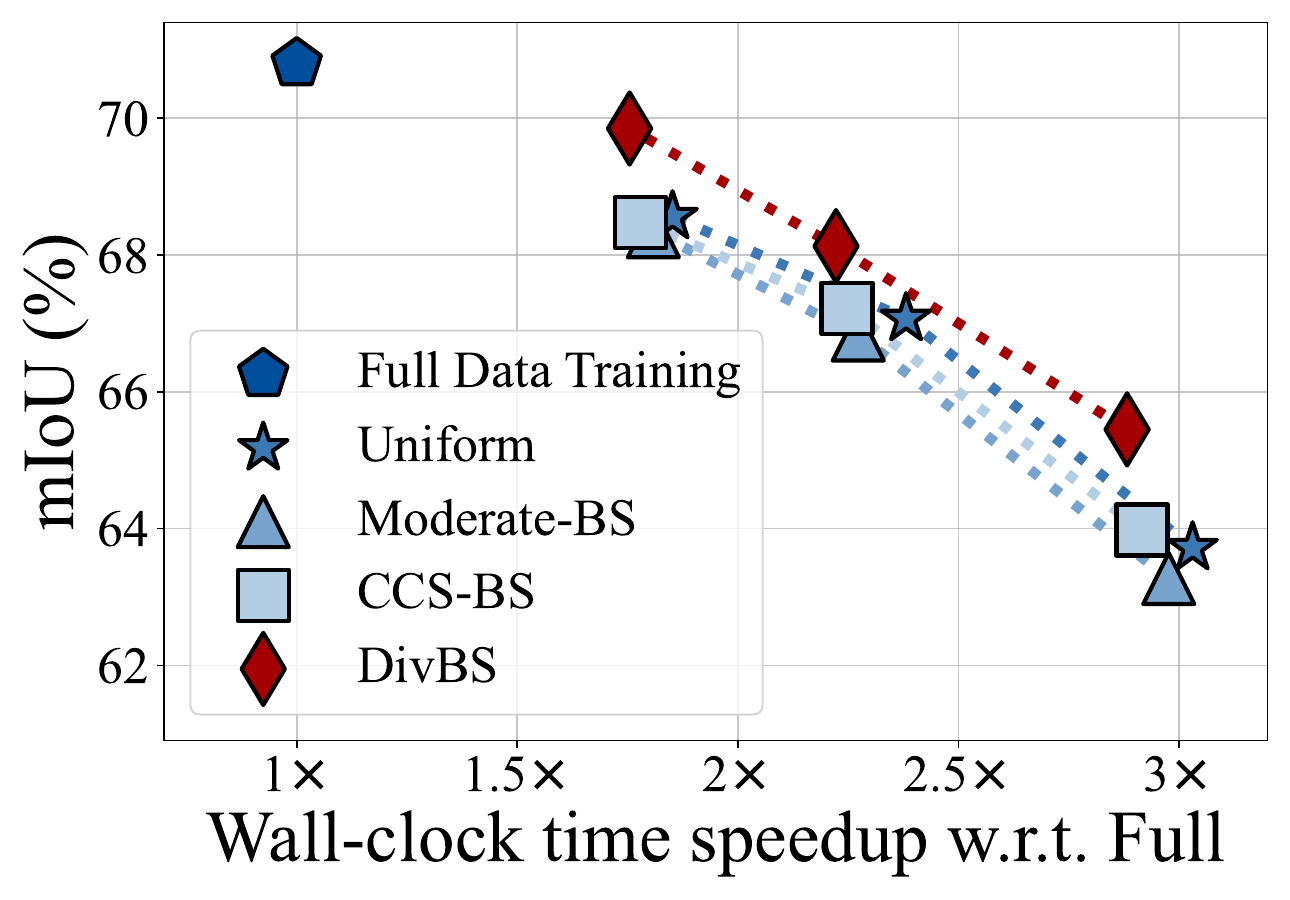}  
\end{minipage}
}
\caption{Performance ($\uparrow$) \emph{v.s.} speedup ($\uparrow$) on (a) CIFAR-10, (b) CIFAR-100, and (c) PASCAL VOC 2012 trainaug. 
The \textbf{upper} panel displays the relationship between the performance (accuracy or mIoU) of different methods and the speedup \emph{w.r.t.} the number of training iterations. The \textbf{lower} panel illustrates the relationship between the performance and the speedup \emph{w.r.t.} the wall-clock time. 
}  
\label{fig: performance-vs-speedup}
\vskip -0.1in 
\end{figure*}

\subsection{Performance Evaluation on Semantic Segmentation}
We compare the performance of different methods on PASCAL VOC
2012 trainaug dataset for semantic segmentation, which is a crucial and practical dense prediction task. Specifically, we train the DeepLabV3~\citep{DBLP:journals/corr/ChenPSA17} segmentation model with MobileNet~\citep{DBLP:journals/corr/HowardZCKWWAA17} as the backbone for 50 epochs, aligning with the details provided in \citet{DBLP:conf/eccv/ChenZPSA18}. For Moderate-BS and CCS-BS, their original metrics, based on classification concepts, are not directly applicable to segmentation tasks. We replace them with training losses that similarly characterize the difficulty of samples, while retaining their selection strategies. We report the results in \cref{tab:Segmentation}. Still, our proposed \methodspace outperforms uniform sampling and other baseline methods in aspects of both the speed to reach the target mIoUs and the final mIoU for the semantic segmentation task.
\subsection{Performance Evaluation on Cross-Modal Retrieval}
In addition to visual tasks, we extend our empirical research to evaluate \methodspace in the cross-modal retrieval task. Specifically, we conduct experiments on Wikipedia with different
budget ratio 10\%, 20\%, 30\%. 
We employ an ImageNet-pretrained VGG-19 model~\citep{DBLP:journals/corr/SimonyanZ14a} as the image backbone and a pre-trained Doc2Vec model~\citep{DBLP:conf/rep4nlp/LauB16} for text. All other implementation details align with \citet{DBLP:conf/cvpr/00020ZZ021}.
In \cref{tab:Retrieval}, we report the Mean Average Precision (MAP) score for both $\mathrm{image}\rightarrow\mathrm{text}$ and $\mathrm{text}\rightarrow\mathrm{image}$ retrieval, along with their average. It can be observed that our \methodspace consistently outperforms uniform sampling in the cross-modal retrieval task. More impressively, \methodspace even surpasses full-data training in terms of $\mathrm{text}\rightarrow\mathrm{image}$ MAP and average MAP at 20\% and 30\% budget ratios. This indicates that our \methodspace remains effective in selecting diverse and high-quality subsets in the cross-modal retrieval task.

\begin{table}[t]
\centering
\caption{Performance of GPT-2 medium (M) with LoRA finetuning on the E2E NLG Challenge with budget ratio 20\%.}
\label{tab:GPT2}
\resizebox{0.99\linewidth}{!}{
\begin{tabular}{c|c|c>{\columncolor[HTML]{EFEFEF}\bfseries}c}
\toprule
GPT-2 M (LoRA) & Full Data Training & Uniform & \method \\
\midrule\midrule
BLEU $\uparrow$          & 68.07\%     &   66.14\%      &   66.87\%                     \\
NIST  $\uparrow$         & 8.726      &  8.624       &  8.685                      \\
MET $\uparrow$           & 46.43\%     &  44.26\%       &  44.57\%                      \\
ROUGE-L $\uparrow$       & 69.44\%     &  66.90\%       &  67.83\%                      \\
CIDEr $\uparrow$         & 2.444     &  2.301       &  2.343   \\\bottomrule                  
\end{tabular}
}
\end{table}

\begin{table}[t]
\centering
\caption{Wall-clock time ($\downarrow$) per epoch of different methods on CIFAR-10 and PASCAL VOC 2012 trainaug.
The results are averaged over ten epochs. The subscript indicates the percentage of time saved compared to full data training.}
\label{tab:wall-clock}
\resizebox{0.97\linewidth}{!}{
\begin{tabular}{c|c|c>{\columncolor[HTML]{EFEFEF}}c}
\toprule
\multicolumn{4}{c}{CIFAR-10} \\\midrule\midrule
Full data training                  & Budget ratio     & Uniform    & \textbf{\method}       \\\midrule\midrule
\multirow{3}{*}{44.7s} & 10\% & 13.2s \textsubscript{$\downarrow$70\%} & 13.9s \textsubscript{$\downarrow$69\%} \\
                       & 20\% & 17.4s \textsubscript{$\downarrow$61\%} & 18.5s \textsubscript{$\downarrow$59\%} \\
                       & 30\% & 22.1s \textsubscript{$\downarrow$51\%} & 23.6s \textsubscript{$\downarrow$47\%}\\\midrule\midrule
\multicolumn{4}{c}{PASCAL VOC 2012 trainaug} \\\midrule\midrule
Full data training      & Budget ratio  & Uniform     & \textbf{\method}        \\\midrule\midrule
\multirow{3}{*}{291.2s} & 10\%   & 96.1s \textsubscript{$\downarrow$67\%}  & 100.9s \textsubscript{$\downarrow$65\%} \\
                        & 20\%   & 125.3s \textsubscript{$\downarrow$58\%} & 132.4s \textsubscript{$\downarrow$55\%} \\
                        & 30\%   & 157.2s \textsubscript{$\downarrow$46\%} & 166.9s \textsubscript{$\downarrow$43\%}\\
\bottomrule
\end{tabular}
}
\vspace{-0.1in}
\end{table}

\subsection{Performance Evaluation on LM Finetuning}
We also validate the effectiveness of our \methodspace on the language model finetuning task. Specifically, we finetune the GPT-2 Medium (M) model~\citep{radford2019language} using LoRA~\citep{DBLP:conf/iclr/HuSWALWWC22} on the E2E NLG Challenge~\citep{DBLP:conf/sigdial/NovikovaDR17}, which is widely used dataset for natural language generation evaluations. We finetune GPT-2 M for 5 epochs with a minibatch size of 4, aligning with the remaining implementation details in \citet{DBLP:conf/iclr/HuSWALWWC22}. We report the results of full data training, uniform sampling and our \methodspace with budget ratio 20\% in \cref{tab:GPT2}. We can observe that our \methodspace consistently outperforms uniform sampling across all 5 metrics, further demonstrating its universality across different tasks and training paradigms.

\subsection{Further Analysis}
\textbf{Properties of the Selected Data.} In 
\cref{fig:knn}, we compare the average feature cosine distances to the k-nearest neighbors ($k=1, 3, 5, 7, 9$) of samples selected by different methods on CIFAR-10. 
\methodspace stands out with the largest KNN distances, highlighting the reduced redundancy and broader coverage of the selected samples.
We then arrange the classes of CIFAR-100-LT in descending order of sample number, grouping every ten classes together, denoted as groups 1-10. In \cref{fig:property}, we illustrate the proportion of samples selected by uniform sampling and our \methodspace for each group. It is evident that \methodspace consistently increases the proportion of tail samples, demonstrating the effective enhancement of diversity in the selected subset.

\begin{table}[t]
\centering
\caption{Final accuracies ($\uparrow$) on CIFAR-100 with different budget ratio 10\%, 20\%, 30\% when using SGD and AdamW as optimizer.}
\vskip 0.03in
\label{tab:optimizer}
\resizebox{0.97\linewidth}{!}{
\begin{tabular}{c|c|p{1.6cm}<{\centering}>{\columncolor[HTML]{EFEFEF}\bfseries}p{1.6cm}<{\centering}}
\toprule
\multicolumn{4}{c}{SGD} \\\midrule\midrule
Full data training                  & Budget ratio    & Uniform    & \method       \\\midrule\midrule
\multirow{3}{*}{77.28\%} & 10\% &70.61\%  & 73.11\% \\
                       & 20\% &74.18\%  & 76.10\% \\
                       & 30\% &75.98\%  & 77.21\%\\\midrule\midrule
\multicolumn{4}{c}{AdamW} \\\midrule\midrule
Full data training      & Budget ratio & Uniform     & \method        \\\midrule\midrule
\multirow{3}{*}{70.50\%} & 10\%   & 66.72\%	& 69.07\% \\
                        & 20\%   & 68.93\% &	70.43\% \\
                        & 30\%   & 69.09\% &	70.47\% \\
\bottomrule
\end{tabular}
}
\end{table}

\textbf{Wall-clock time.} Compared to uniform sampling, our \methodspace incurs some additional overhead in the selection process. While our research primarily investigates data selection strategies favorable for model convergence, we still empirically measure and compare the practical impact of different methods on training duration. In \cref{tab:wall-clock}, we report the wall-clock time per epoch on CIFAR-10 image classification and PASCAL VOC 2012 trainaug semantic segmentation. We can observe that the time proportion of uniform sampling compared to full time training is greater than the corresponding budget ratio. This discrepancy arises because batch selection only reduces the data used for network updates while operations like loading data, model validation, and saving model files still require the same amount of time. Our \methodspace introduces additional overhead of less than 5\% total time compared to uniform sampling. There is potential to further reduce this overhead using hardware techniques or parallelization methods, as discussed in \cref{sec:Acceleration of the Selection Process}.

\textbf{Robustness with different optimizers.} We validate the robustness of our \methodspace under different optimizers. \cref{tab:optimizer} showcases the performance of our \methodspace on CIFAR-100 using both SGD and AdamW.
\methodspace consistently outperforms the baseline across various budget ratios. Moreover, with both optimizers, \methodspace exhibits minimal performance loss at 30\% budget compared to full dataset training.

\textbf{Narrow the gap with methods involving extra reference models.} In \cref{fig:cifar*}, we compare our \methodspace with RHO-LOSS~\citep{pmlr-v162-mindermann22a} and Bayesian~\citep{deng2023towards}, which utilize extra reference models for selection, on CIFAR-10* and CIFAR-100* with 10\% budget. The implementation details are strictly aligned with those of RHO-LOSS and Bayesian. CIFAR-10/100* are versions of CIFAR-10/100, with only half of the data retained~\citep{pmlr-v162-mindermann22a}.  Our \methodspace significantly narrows the gap with methods that utilize extra reference models.

\textbf{Trade-off between performance and speedup.} 
While our primary aim is to reduce training costs while preserving performance, there inherently exists a trade-off between model performance and acceleration effects. In \cref{fig: performance-vs-speedup}, we present the trade-off between the performance and speedup (\emph{w.r.t.} training iterations and wall-clock time) of various methods on CIFAR-10, CIFAR-100, and PASCAL VOC 2012 trainaug datasets. Points located in the upper-right corner of the subfigures indicate superior performance coupled with enhanced acceleration effects. Notably, our \methodspace excels in achieving a superior performance-speedup trade-off.

\section{Related Work}\label{sec:related work}
\textbf{Coreset selection}~\citep{DBLP:conf/icml/MirzasoleimanBL20,zhang2024TDDS}, also known as data pruning, aims to create a smaller subset (coreset) of the original data that captures essential patterns for efficient model training.  Various metrics like the entropy score~\citep{DBLP:conf/iclr/ColemanYMMBLLZ20}, EL2N score~\citep{DBLP:conf/nips/PaulGD21}, forgetting score~\citep{DBLP:conf/iclr/TonevaSCTBG19}, and classification margin~\citep{DBLP:conf/nips/Pleiss0EW20}, are used to measure individual differences among data points.
Yet, selecting samples with the highest scores can lead to diversity issues, especially at high pruning rates, resulting in performance degradation~\citep{DBLP:conf/iclr/XiaL0S0L23}. \citet{DBLP:conf/iclr/ZhengLL023,DBLP:conf/iclr/XiaL0S0L23} propose strategies of selecting samples with intermediate scores or with diverse scores, yielding promising results under high pruning rates.~However, coreset selection faces limitations in prioritizing samples with diverse properties during different training stages. Moreover, the acceleration benefits are noticeable only when the coreset is repeatedly used to train various models, as the data selection relies on a full data trained model.

\textbf{Curriculum learning}~\citep{bengio2009curriculum} seeks to enhance model performance with minimal computational costs by  prioritizing ``easy" samples before uniformly training on the entire dataset~\citep{jiang2015self,sinha2020curriculum,zhou2020uncertainty}. Although curriculum learning can improve model convergence, it may not efficiently reduce training expenses. And they fall short in addressing the challenge of skipping redundant points that have already been learned.

\textbf{Online batch selection} speeds up model training by using only a portion of data in each batch. \citet{Jiang2019AcceleratingDL,pmlr-v80-katharopoulos18a,Loshchilov2015OnlineBS} prioritize hard samples based on criteria like training loss or gradient norm, but these can hinder early-stage model convergence and be sensitive to outliers. \citet{pmlr-v162-mindermann22a} and \citet{deng2023towards} achieve notable speedup by leveraging additional reference models to select valuable samples. However, their practical applications are restricted by the availability of well-performing reference models. Compared to prior methods, which score and select data in a sample-wise manner, our reference-model-free \method, excels in selecting high-quality and diverse samples by optimizing the overall orthogonalized representativeness of the subset after removing inter-sample redundancy.

\label{sec:Acceleration of the Selection Process}
\vspace{-0.03in}
\textbf{Acceleration of the Selection Process.} Except for uniform sampling, online batch selection methods generally require an additional forward pass for each batch. \citet{jouppi2017datacenter} achieve nearly 10$\times$ acceleration in forward propagation by leveraging low-precision cores on GPUs or TPUs, grounded in the observation that forward propagation exhibits higher tolerance to low precision. \citet{alain2015variance} utilize a group of workers to asynchronously execute forward propagation and selection while the main process trains on the recently chosen data, thereby saving time of selection processes. Selection can also be cheaper by reusing features, gradients, or losses computed in previous epochs~\citep{Loshchilov2015OnlineBS}. 
Though our research scope is limited to the effects of different selection strategies, exploring the integration of these techniques for maximum acceleration is a promising and noteworthy avenue.

\section{Conclusion}
We investigate the diversity challenge that may arise from existing batch selection methods independently scoring and selecting data in a sample-wise manner. We propose \Methodspace (\method), which selects diverse and representative subsets by optimizing the overall orthogonalized representativeness after removing inter-sample redundancy, thereby accelerating model training. Extensive experiments validate the superiority of our \methodspace in performance-speedup tradeoff across various tasks.

\section*{Acknowledgement}
This work is supported by the National Key R\&D Program of China (No. 2022ZD0160702), STCSM (No. 22511106101, No. 22511105700, No. 21DZ1100100), 111 plan (No. BP0719010) and National Natural Science Foundation of China (No. 62306178). LY is supported by Career Development Fund (CDF) of the Agency for Science, Technology and Research (A*STAR) (No: C233312007). 

\section*{Impact Statement}
Investigating the impact of data selection on the performance of minority group data is of paramount importance, especially for some social applications, such as medical diagnosis, auto-driving, and criminal justice. Some data selection algorithms may prioritize minority groups, due to their slower learning pace and the challenging nature of instances within these groups~\citep{DBLP:conf/icml/YangZKG23,hong2024on}. Conversely, other algorithms may downplay the importance of rare groups, as excluding them has a minimal impact on the overall performance of the training set. Our method is designed to guarantee the diversity of selected data, thereby somewhat safeguarding the priority of minority group data within the selection process, as demonstrated in experiments under class imbalance.

\bibliography{main}
\bibliographystyle{icml2024}

\newpage
\appendix
\onecolumn

\section{Notations}
In \cref{tab:notations}, we summarize the notations used in this paper.
\begin{table}[H]
    \centering
     \caption{Summary of notations}\label{tab:notations}
    \begin{tabular}{p{3cm}<{\raggedright}|p{5cm}<{\raggedright}|p{7cm}<{\raggedright}}
    \toprule
     Category    & Notation & Description  \\
     \midrule\midrule
       Data and Sets  & $\gD$ & Training data set\\
       &$B$ & (Large) training data batch\\
       &$N_B$ & Size of $B$\\
       & $d$ & a data point\\
       & $S\subset B$ & A smaller subset of $B$\\
       & $N_S$ & Budget of the size of $S$ \\
       & $U = g(B,\theta) = \{g(d_i,\theta)\}_{i=1}^{N_B}$ & Features used for selection from $B$ \\
      & $u\in U$ & A feature in $U$ \\
      &$\gE(g(S,\theta))$    & $\{E = \{e_i,\ldots e_{|E|}\} | \forall e_i, e_j \in E, e_i\cdot e_j = \delta_{ij}, \mathrm{span}(E) = \mathrm{span}(g(S,\theta))\}$, Set of all potential orthonormal bases for the subspace spanned by $g(S,\theta)$  \\
      & $E\in \gE(g(S,\theta))$ & A orthonormal base for $g(S,\theta)$  \\
      & $e\in E$ & A unit vector in $E$\\
      &$j,k \in B \setminus S$ & Two data points not in $S$   \\
      
\midrule
       Model and functions &$f_{\theta}$ & a deep model with parameters $\theta$ \\
       & $\theta$ & Model parameters\\
       & $g(B,\theta)$, $g(d,\theta)$ & Mapping function from data points to the selection features, given model parameters $\theta$ \\
       & $s(u)$ & Scoring function for sample-wise selection \\
       &$r(S,B,\theta)$& $\max_{E\in \gE(g(S,\theta))} \sum_{e\in E}\sum_{u\in g(B,\theta)}e\cdot u$, the orthogonalized representativeness of subset \(S\) with respect to \(B\)\\
       &\(\delta_{ij}\)    & Kronecker delta, taking the value $1$ when \(i = j\) and $0$ otherwise\\
       &$\mathrm{span}(\cdot)$ & Subspace spanned by all elements in a set \\
       &$r'(S,B,\theta)$& $\sqrt{\sum_{e\in E} (e\cdot \sum_{u\in g(B,\theta)}u)^2 }$, $\forall E \in \mathcal{G}_E(g(S,\theta))$, an auxiliary function introduced to study algorithm performance\\
       &$F(\cdot)$    & A general set function  \\
       &$\hat{\beta}(x)$ & $\sqrt{a+x}-\sqrt{x}$ with $a>0$ \\\midrule
       Others &$e$ & Euler's Number, approximately equal to $2.71828$  \\\bottomrule
    \end{tabular}
    
\end{table}
\clearpage

\section{Technical Details}
\subsection{Proof of \cref{prop:basis}}\label{appendix:Proof of prop:basis}
\begin{proof}[Proof of \cref{prop:basis}]\ \\
Based on \cref{eq:representativeness-subset}:
\begin{equation}
    \begin{split}
        r(S,B,\theta)&=\max_{E\in \gE(g(S,\theta))} \sum_{e\in E}\sum_{u\in g(B,\theta)}e\cdot u \\
        &=\max_{E\in \gE(g(S,\theta))} (\sum_{e\in E}e)\cdot(\sum_{u\in g(B,\theta)}u) \\
        &= \max_{E\in \gE(g(S,\theta))} (\sum_{e\in E}e)\cdot(\sum_{e\in E} (e\cdot \sum_{u\in g(B,\theta)}u)e)
    \end{split}
\end{equation}
where $\sum_{e\in E} (e\cdot \sum_{u\in g(B,\theta)}u)e$ is the projection of $\sum_{u\in g(B,\theta)}u$ onto the subspace spanned by $g(S,\theta)$, and it remains constant with variations of $E$, with a length of $\sqrt{\sum_{e\in E} (e\cdot \sum_{u\in g(B,\theta)}u)^2 }$. The length of $\sum_{e\in E}e$ is a constant $\sqrt{|E|}$, and it can take any direction as $E$ varies. 
Therefore, when $E$ changes to align the directions of $\sum_{e\in E} (e\cdot \sum_{u\in g(B,\theta)}u)e$ and 
$\sum_{e\in E}e$, the term 
$(\sum_{e\in E}e)\cdot(\sum_{e\in E} (e\cdot \sum_{u\in g(B,\theta)}u)e)$ achieves its maximum value $\sqrt{\sum_{e\in E} (e\cdot \sum_{u\in g(B,\theta)}u)^2 }\times \sqrt{|E|} = \sqrt{|E|\sum_{e\in E} (e\cdot \sum_{u\in g(B,\theta)}u)^2 }$, which remains constant as 
$E$ changes in $\gE(g(S,\theta))$. Thus, $\forall E \in \gE(g(S,\theta))$, we have 
\begin{equation}
    \sqrt{|E|\sum_{e\in E} (e\cdot \sum_{u\in g(B,\theta)}u)^2 } = r(S,B,\theta).
\end{equation}

\end{proof}

\subsection{Proof of \cref{prop:r'}}
\begin{definition}[Proposition 2.3 in \citet{DBLP:journals/ftml/Bach13}]\label{definition:submodular}
The set-function $F$ is submodular if and only if for all $S\subset B$, and $j,k \in B \setminus S$, we have
$$
F(S\cup\{k\})-F(S) \geq F(S \cup \{j,k\})-F(S \cup \{j\}).
$$
And the function is called normalized if $F(\emptyset)= 0$ and monotone if and only if $F(S’) \leq F(S)$, $\forall S'\subset S$.
\end{definition}

\begin{proof}[Proof of \cref{prop:r'}]
\ \\
Given $r'(S,B,\theta) = \sqrt{\sum_{e\in E} (e\cdot \sum_{u\in g(B,\theta)}u)^2 }$, $\forall E \in \gE(g(S,\theta))$. Note that $r'$ remains constant for arbitrary choice of the basis $E$ spanned the same subspace. 
For all $S\subset B$, and $j,k \in B \setminus S$, we discuss this problem case by case. 

(1) $g(j,\theta)$ is in the subspace spanned by $g(S,\theta)$, \IE, $g(j,\theta) - \sum_{e\in E}(e\cdot g(j,\theta))e = 0$. we have 
\begin{equation}
    \begin{split}
        &r'(S,B,\theta) = r'(S\cup\{j\},B,\theta)\\
        &r'(S\cup\{k\},B,\theta) = r'(S\cup\{j,k\},B,\theta)
    \end{split}
\end{equation}
Thus, we have 
\begin{equation}
    r'(S\cup\{k\},B,\theta) - r'(S,B,\theta) = r'(S\cup\{j,k\},B,\theta) - r'(S\cup\{j\},B,\theta).
\end{equation}

(2) $g(j,\theta)$ is not in the subspace spanned by $g(S,\theta)$, \IE, $g(j,\theta) - \sum_{e\in E}(e\cdot g(j,\theta))e \neq 0$. Define $e_j$ as 
\begin{equation}
    e_j = \frac{g(j,\theta) - \sum_{e\in E}(e\cdot g(j,\theta))e}{\|g(j,\theta) - \sum_{e\in E}(e\cdot g(j,\theta))e\|},
\end{equation}
we have $E\cup\{e_j\}\in \gE(g(S\cup\{j\},\theta))$.

\hspace{2em}(2a) $g(k,\theta)$ is in the subspace spanned by $g(S\cup\{j\},\theta)$. We have
\begin{equation}
    \begin{split}
        &r'(S\cup\{j,k\},B,\theta) - r'(S\cup\{j\},B,\theta) = 0\\
        &r'(S\cup\{k\},B,\theta) - r'(S,B,\theta) \geq 0 = r'(S\cup\{j,k\},B,\theta) - r'(S\cup\{j\},B,\theta)
    \end{split}
\end{equation}

\hspace{2em}(2b) $g(k,\theta)$ is not in the subspace spanned by $g(S\cup\{j\},\theta)$. Define $e_k$ as
\begin{equation}
    e_k = \frac{g(k,\theta) - \sum_{e\in E}(e\cdot g(k,\theta))e - (e_j\cdot g(k,\theta))e_j}{\|g(k,\theta) - \sum_{e\in E}(e\cdot g(k,\theta))e - (e_j\cdot g(k,\theta))e_j\|},
\end{equation}

\hspace{2em}we have
\begin{equation}
    \begin{split}
        &E\cup\{e_k\}\in \gE(g(S\cup\{k\},\theta)) \\
        &E\cup\{e_j,e_k\}\in \gE(g(S\cup\{j,k\},\theta))
    \end{split}
\end{equation}

\hspace{2em}Then
\begin{equation}
    \begin{split}
        & r'(S,B,\theta) = \sqrt{\sum_{e\in E} (e\cdot \sum_{u\in g(B,\theta)}u)^2} \\
        & r'(S\cup\{k\},B,\theta) = \sqrt{\sum_{e\in E} (e\cdot \sum_{u\in g(B,\theta)}u)^2 + (e_k\cdot \sum_{u\in g(B,\theta)}u)^2} \\
        & r'(S\cup\{j\},B,\theta) = \sqrt{\sum_{e\in E} (e\cdot \sum_{u\in g(B,\theta)}u)^2 + (e_j\cdot \sum_{u\in g(B,\theta)}u)^2} \\
        & r'(S\cup\{j,k\},B,\theta) = \sqrt{\sum_{e\in E} (e\cdot \sum_{u\in g(B,\theta)}u)^2 + (e_j\cdot \sum_{u\in g(B,\theta)}u)^2 + (e_k\cdot \sum_{u\in g(B,\theta)}u)^2} \\
    \end{split}
\end{equation}

\hspace{2em}
Note that the function $\hat{\beta}(x)=\sqrt{a+x}-\sqrt{x}$ with $a>0$ is a decreasing function w.r.t. $x$. Let $a=(e_k\cdot \sum_{u\in g(B,\theta)}u)^2$, $x_1 = \sum_{e\in E} (e\cdot \sum_{u\in g(B,\theta)}u)^2$ and $x_2 = \sum_{e\in E} (e\cdot \sum_{u\in g(B,\theta)}u)^2 + (e_j\cdot \sum_{u\in g(B,\theta)}u)^2 $,  we have $x_1<x_2$ and  $\hat{\beta}(x_1)>\hat{\beta}(x_2)$. It follows that
\begin{equation}
    r'(S\cup\{k\},B,\theta) - r'(S,B,\theta) > r'(S\cup\{j,k\},B,\theta) - r'(S\cup\{j\},B,\theta)
\end{equation}

In summary, for all $S\subset B$, and $j,k \in B \setminus S$, we have
\begin{equation}
    r'(S\cup\{k\},B,\theta) - r'(S,B,\theta) \geq r'(S\cup\{j,k\},B,\theta) - r'(S\cup\{j\},B,\theta)
\end{equation}

And it's obvious that $r'(S,B,\theta)$ is normalized and monotone.
    
\end{proof}

\clearpage
\subsection{Proof of \cref{prop:r guarantee}}\label{appendix:Proof of prop:r guarantee}

\begin{lemma}[\citet{DBLP:journals/mp/NemhauserWF78}]\label{lemma: 1-e-1}
Greedy maximization of a monotone, submodular function returns a set with value within a factor of $1-e^{-1}$ from the optimum set with the same size.
\end{lemma}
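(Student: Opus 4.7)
The plan is to follow the classical Nemhauser–Wolsey–Fisher argument. Let $F$ be the monotone submodular function (normalized so $F(\emptyset)=0$, which is the standard setting for this theorem), let $k$ denote the cardinality budget, let $S^*$ be an optimal $k$-set, and let $S_0 = \emptyset, S_1, \ldots, S_k$ be the greedy sequence, where $S_{i+1} = S_i \cup \{e_{i+1}\}$ with $e_{i+1} \in \arg\max_{e \notin S_i} \bigl[F(S_i \cup \{e\}) - F(S_i)\bigr]$. The goal is to establish $F(S_k) \geq (1 - e^{-1}) F(S^*)$.

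First I would derive the ``covering'' consequence of submodularity: for any set $A$ and any finite $T = \{t_1, \ldots, t_m\}$,
\begin{equation*}
F(A \cup T) - F(A) \leq \sum_{j=1}^m \bigl[F(A \cup \{t_j\}) - F(A)\bigr].
\end{equation*}
This follows by writing the left side as a telescoping sum $\sum_j [F(A \cup \{t_1, \ldots, t_j\}) - F(A \cup \{t_1, \ldots, t_{j-1}\})]$ and applying the submodular inequality of \cref{definition:submodular} to each marginal term, which bounds it above by the marginal $F(A \cup \{t_j\}) - F(A)$.

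Next, I would apply this inequality with $A = S_i$ and $T = S^* \setminus S_i$. Using monotonicity to conclude $F(S^*) \leq F(S_i \cup S^*)$, and noting $|S^* \setminus S_i| \leq k$, I obtain
\begin{equation*}
F(S^*) - F(S_i) \leq \sum_{t \in S^* \setminus S_i} \bigl[F(S_i \cup \{t\}) - F(S_i)\bigr] \leq k \bigl[F(S_{i+1}) - F(S_i)\bigr],
\end{equation*}
where the final step uses that the greedy element $e_{i+1}$ maximizes the marginal gain over all candidates in $S^* \setminus S_i$. Rearranging produces the recursion
\begin{equation*}
F(S^*) - F(S_{i+1}) \leq \bigl(1 - \tfrac{1}{k}\bigr)\bigl(F(S^*) - F(S_i)\bigr).
\end{equation*}

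Finally, iterating this recursion from $i = 0$ together with $F(\emptyset) = 0$ and the elementary bound $(1 - 1/k)^k \leq e^{-1}$ yields $F(S^*) - F(S_k) \leq e^{-1} F(S^*)$, which rearranges to the claim. The main obstacle is the first step: the covering inequality is the only place submodularity is genuinely used, and setting up the correct telescoping enumeration of $T$ is the conceptual heart of the argument. Everything afterward is mechanical algebra and a standard exponential estimate.
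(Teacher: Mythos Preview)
Your argument is the standard Nemhauser--Wolsey--Fisher proof and is correct as written: the covering inequality via telescoping plus diminishing returns, the comparison of each term against the greedy gain, the recursion on $F(S^*)-F(S_i)$, and the final exponential bound are all sound.

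The paper itself does not prove \cref{lemma: 1-e-1}; it simply states the lemma and attributes it to \citet{DBLP:journals/mp/NemhauserWF78}, treating it as a black-box classical result that is then invoked in the proof of \cref{lemma:r' ratio}. So there is no ``paper's own proof'' to compare against here---you have supplied exactly the canonical argument that the citation is pointing to.
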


\begin{lemma}\label{lemma:r' ratio}
    \cref{alg:greedy} returns a $1-e^{-1}$ approximation for $\arg\max_{S\subset B} r'(S,B,\theta), \st |S|\leq N_S$. That is, denote $S^{**}$ as the optimum subset for maximizing $r'(S,B,\theta)$, and $S'$ is the output of \cref{alg:greedy}, we have
$$r'(S',B,\theta) \geq (1-e^{-1}) r'(S^{**},B,\theta).$$
\end{lemma}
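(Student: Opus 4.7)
\textbf{Proof plan for Lemma~\ref{lemma:r' ratio}.} The plan is to show that Algorithm~\ref{alg:greedy}, although nominally written to maximize $r(S,B,\theta)$, is in fact identical to the classical greedy algorithm for maximizing the auxiliary function $r'(S,B,\theta)$, and then to invoke Proposition~\ref{prop:r'} together with Lemma~\ref{lemma: 1-e-1} to conclude.

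First I would unpack the selection rule on line~6 of Algorithm~\ref{alg:greedy}. Writing $\mathrm{Sum} = \sum_{u \in g(B,\theta)} u$, which is fixed throughout the loop, I would let $E$ be the current orthonormal basis for $g(S,\theta)$, and for each candidate $d \in B \setminus S$ define $e(d) = (g(d,\theta) - \sum_{e \in E}(e \cdot g(d,\theta))e)/\lVert g(d,\theta) - \sum_{e \in E}(e \cdot g(d,\theta))e\rVert$ (when the denominator is nonzero; otherwise adding $d$ contributes nothing). Then $E \cup \{e(d)\} \in \mathcal{E}(g(S \cup \{d\},\theta))$, and the identity
\begin{equation*}
r'(S \cup \{d\},B,\theta)^2 \;=\; r'(S,B,\theta)^2 \;+\; \bigl(e(d) \cdot \mathrm{Sum}\bigr)^2
\end{equation*}
follows directly from the definition of $r'$ and the basis-invariance noted in Proposition~\ref{prop:basis}. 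Consequently the marginal gain $r'(S \cup \{d\},B,\theta) - r'(S,B,\theta)$ is a strictly increasing function of $|e(d) \cdot \mathrm{Sum}|$, so maximizing $|e_i \cdot \mathrm{Sum}|$ over $i$ (line~6) is exactly the greedy step for $r'$.

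Next I would combine this observation with Proposition~\ref{prop:r'}, which asserts that $r'(\cdot,B,\theta)$ is submodular, normalized, and monotone on subsets of $B$, and then apply Lemma~\ref{lemma: 1-e-1} (the Nemhauser--Wolsey--Fisher bound) under the uniform-matroid constraint $|S| \le N_S$. This immediately yields $r'(S',B,\theta) \ge (1 - e^{-1}) r'(S^{**},B,\theta)$, which is the stated conclusion.

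The only delicate point, and what I would treat as the main obstacle, is the degenerate situation where some candidate $d$ has $g(d,\theta)$ already lying in $\mathrm{span}(g(S,\theta))$, so the normalization in the definition of $e(d)$ is ill-defined. I would handle this by convention: such a $d$ contributes a zero marginal gain to $r'$ (since $r'$ depends only on $\mathrm{span}(g(S,\theta))$) and is never strictly preferred by the greedy rule, so we may regard it as carrying value $|e(d) \cdot \mathrm{Sum}| = 0$. With this convention the equivalence between the algorithm's selection rule and greedy maximization of $r'$ is preserved, and the submodularity argument in Proposition~\ref{prop:r'} (which already treats these cases explicitly) applies verbatim, completing the proof.
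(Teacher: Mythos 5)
Your proposal is correct and takes essentially the same route as the paper, whose entire proof is the one-line observation that \cref{alg:greedy} is also a greedy maximization of $r'(S,B,\theta)$, so \cref{prop:r'} and \cref{lemma: 1-e-1} give the bound immediately. The marginal-gain identity $r'(S\cup\{d\},B,\theta)^2 = r'(S,B,\theta)^2 + (e(d)\cdot\mathrm{Sum})^2$ and your treatment of the degenerate (already-spanned) case simply make explicit the equivalence that the paper asserts without detail.
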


\begin{proof}[Proof of \cref{lemma:r' ratio}]\ \\
    Note that \cref{alg:greedy} is also a greedy maximization of $r'(S,B,\theta)$, given \cref{prop:r'} and \cref{lemma: 1-e-1}, the conclusion can be drawn immediately.\\
\end{proof}

\begin{proof}[Proof of \cref{prop:r guarantee}]\ \\
 Denote $S^{*}$ as the optimum subset for maximizing $r(S,B,\theta)$, $S^{**}$ as the optimum subset for maximizing $r'(S,B,\theta)$, and $S'$ as the output of \cref{alg:greedy}. Let $E^*,E^{**},E'$ be orthonormal bases corresponding to $S^{*}$, $S^{**}$, $S'$, respectively. We can get
 \begin{equation}
     |E^*| = |E^{**}| = |E'| = \min(\mathrm{rank}(B),N_S).
 \end{equation}

Otherwise, If $|E| < \min(\mathrm{rank}(B),N_S)$, there must be at least one selected $d_i$ that does not contribute to $E$, and an unselected $d_j$ that can contribute a new element to $E$. Therefore, replacing $d_i$ with $d_j$ further increases the objective value, both from a global and a sequential greedy perspective. 

Based on \cref{lemma:r' ratio}, we have
\begin{equation}
    \begin{split}
        r(S',B.\theta) &= \sqrt{\min(\mathrm{rank}(B),N_S)}r'(S',B.\theta)\\
        &\geq (1-e^{-1}) \sqrt{\min(\mathrm{rank}(B),N_S)}r'(S^{**},B,\theta)\\
        &\geq (1-e^{-1}) \sqrt{\min(\mathrm{rank}(B),N_S)}r'(S^{*},B,\theta)\quad (\text{Definition of }S^{**})\\
        &= (1-e^{-1})r(S^{*},B,\theta)
    \end{split}
\end{equation}

\end{proof}

\subsection{Plain-text Description of \cref{alg:greedy,alg:DivBS}}
\label{appendix:plain-text for algs}
\cref{alg:greedy}:
\begin{enumerate}
	\item Initialize the selected subset $S$ to an empty set and the corresponding orthonormal basis $E$ to an empty set. Denote the sum of all features of elements of batch $B$ as $\mathrm{Sum}$: $S \leftarrow \emptyset$, $E \leftarrow \emptyset$, $\mathrm{Sum} \leftarrow \sum_{u\in g(B,\theta)}u$.
	\item Add a sample to $S$ that maximizes the current $r(S,B,\theta) = \sqrt{|E|\sum_{e\in E} (e\cdot \sum_{u\in g(B,\theta)}u)^2 }$, where $E \in \mathcal{E}(g(S,\theta))$.
	\begin{enumerate}
	\item Compute the contribution of each candidate sample when individually added to $S$ to the orthonormal basis $E$: $E_{\text{Cand}} \leftarrow {\frac{g(d,\theta) - \sum_{e \in E} (e \cdot g(d,\theta)) e}{|g(d,\theta) - \sum_{e \in E} (e \cdot g(d,\theta)) e|} \text{ for } d \in B}$
	\item Identify the sample that maximizes $r$: $\text{idx} \leftarrow \arg\max_{i} |e_i\cdot \mathrm{Sum}|, e_i \in E_{\text{Cand}}$
	\item Update $E$, $S$, $B$: $S \leftarrow S \cup\{d_{\text{idx}}\}$, $E \leftarrow E \cup\{e_{\text{idx}}\}$, $B \leftarrow B \setminus \{d_{\text{idx}}\}$
	\end{enumerate}
	\item Repeat Step 2 until $|S| = N_S$.
\end{enumerate}

\cref{alg:DivBS}:
\begin{enumerate}
	\item Initialize the selected subset $S$ to an empty set and the corresponding orthonormal basis $E$ to an empty set. Initialize $\mathrm{Sum}$ as the sum of all features of elements of batch $B$: $\leftarrow \emptyset$, $E \leftarrow \emptyset$, $\mathrm{Sum} \leftarrow \sum_{u\in g(B,\theta)}u$.
	\item Add a sample to $S$ that approximately maximizes the current $r$, referring to \cref{appendix:Approximation from alg:greedy to alg:DivBS} for the approximation.
	\begin{enumerate}
	\item  Select the sample $d$ to be added at the current step: $d \leftarrow \arg\max_{d \in B} |g(d,\theta) \cdot \mathrm{Sum}|$
	\item Compute the contribution of sample $d$ to the current orthonormal basis: $e \leftarrow \frac{g(d,\theta) - \sum_{e \in E} (e \cdot g(d,\theta)) e}{|g(d,\theta) - \sum_{e \in E} (e \cdot g(d,\theta)) e|}$
	\item Update $E$, $S$, $B$: $S \leftarrow S \cup\{d\}$, $E \leftarrow E \cup\{e\}$, $B \leftarrow B \setminus\{d\}$
	\end{enumerate}
\end{enumerate}

\subsection{Approximation from \cref{alg:greedy} to \cref{alg:DivBS}}\label{appendix:Approximation from alg:greedy to alg:DivBS}

In \cref{alg:greedy}, we select $d_{greedy}$ {as follows} :
\begin{equation}
    d_{greedy} = \arg\max_{d\in B} |\frac{g(d,\theta) - \sum_{e \in E} (e \cdot g(d,\theta)) e}{\|g(d,\theta) - \sum_{e \in E} (e \cdot g(d,\theta)) e\|}\cdot \sum_{u\in g(B,\theta)}u|
\end{equation}
where $E$ represents an orthogonal basis corresponding to the already selected samples (line 7,8 in \cref{alg:greedy}). 
If we disregard the normalization term $\|g(d,\theta) - \sum_{e \in E} (e \cdot g(d,\theta)) e\|$, then
\begin{equation}
   \begin{split}
&|{g(d,\theta) - \sum_{e \in E} (e \cdot g(d,\theta)) e}\cdot \sum_{u\in g(B,\theta)}u| \\&= |(g(d,\theta) - \sum_{e \in E} (e \cdot g(d,\theta)) e)\cdot(\sum_{u\in g(B,\theta)}u - \sum_{e\in E}(e\cdot\sum_{u\in g(B,\theta)}u)e +\sum_{e\in E}(e\cdot\sum_{u\in g(B,\theta)}u)e)|\\
&= |(g(d,\theta) - \sum_{e \in E} (e \cdot g(d,\theta)) e)\cdot(\sum_{u\in g(B,\theta)}u - \sum_{e\in E}(e\cdot\sum_{u\in g(B,\theta)}u)e |\\
&= |g(d,\theta)\cdot(\sum_{u\in g(B,\theta)}u - \sum_{e\in E}(e\cdot\sum_{u\in g(B,\theta)}u)e |
   \end{split} 
\end{equation}

The third and fourth lines follow from the fact that for any $e_i\in E$, we have $(g(d,\theta) - \sum_{e \in E} (e \cdot g(d,\theta)) e)\cdot e_i = 0$, and $(\sum_{u\in g(B,\theta)}u - \sum_{e\in E}(e\cdot\sum_{u\in g(B,\theta)}u)e)\cdot e_i=0$. Note that $\arg\max_{d\in B} |g(d,\theta)\cdot(\sum_{u\in g(B,\theta)}u - \sum_{e\in E}(e\cdot\sum_{u\in g(B,\theta)}u)e) |$ corresponds to line 5 of \cref{alg:DivBS} given $\mathrm{Sum} = \sum_{u\in g(B,\theta)}u - \sum_{e\in E}(e\cdot\sum_{u\in g(B,\theta)}u)e$ (line 6,8,9 of \cref{alg:DivBS}), 
{we can employ \cref{alg:DivBS} to approximate \cref{alg:greedy}}.

\section{Supplement for Experiments}
\subsection{Toy Example (\cref{fig: toy})}\label{appendix:toy}
In \cref{fig: toy}, we visualize a toy motivating example involving a four-class classification task among red, blue, green and yellow points. Specifically, we sample 1000 red points, 300 blue points, 150 green points, and 20 yellow points from following normal distributions: $N([0,0],[1,1])$, $N([5,0],[1,1])$, $N([0,5],[1,1])$, and $N([5,5],[1,1])$, respectively. We use a two-layer MLP with 100 hidden neurons as the model for the toy study. We construct a batch $B$ using all available training data. The toy models are trained using Adam with learning rate 0.001 for 100 epochs. The budget ratio is set to 10\%.

\subsection{T-SNE Visualization}
In \cref{fig:tsne}, we visualize subsets selected by different methods from the same batch of data {on CIFAR-10. The batch size in all the experiments is set to 320. $10\%$-budget, i.e., 32 samples are selected. The t-SNE~\citep{JMLR:v9:vandermaaten08a} visualization of the  last layer features is shown in \cref{fig:tsne}.} The red points represent the selected samples, and the gray points represent the full data. We have circled highly redundant samples. It is evident that baseline methods tend to select redundant samples, wasting data capacity, while our \methodspace effectively avoids such issues.

\begin{figure}[t]
    \centering
    \includegraphics[width=0.8\textwidth]{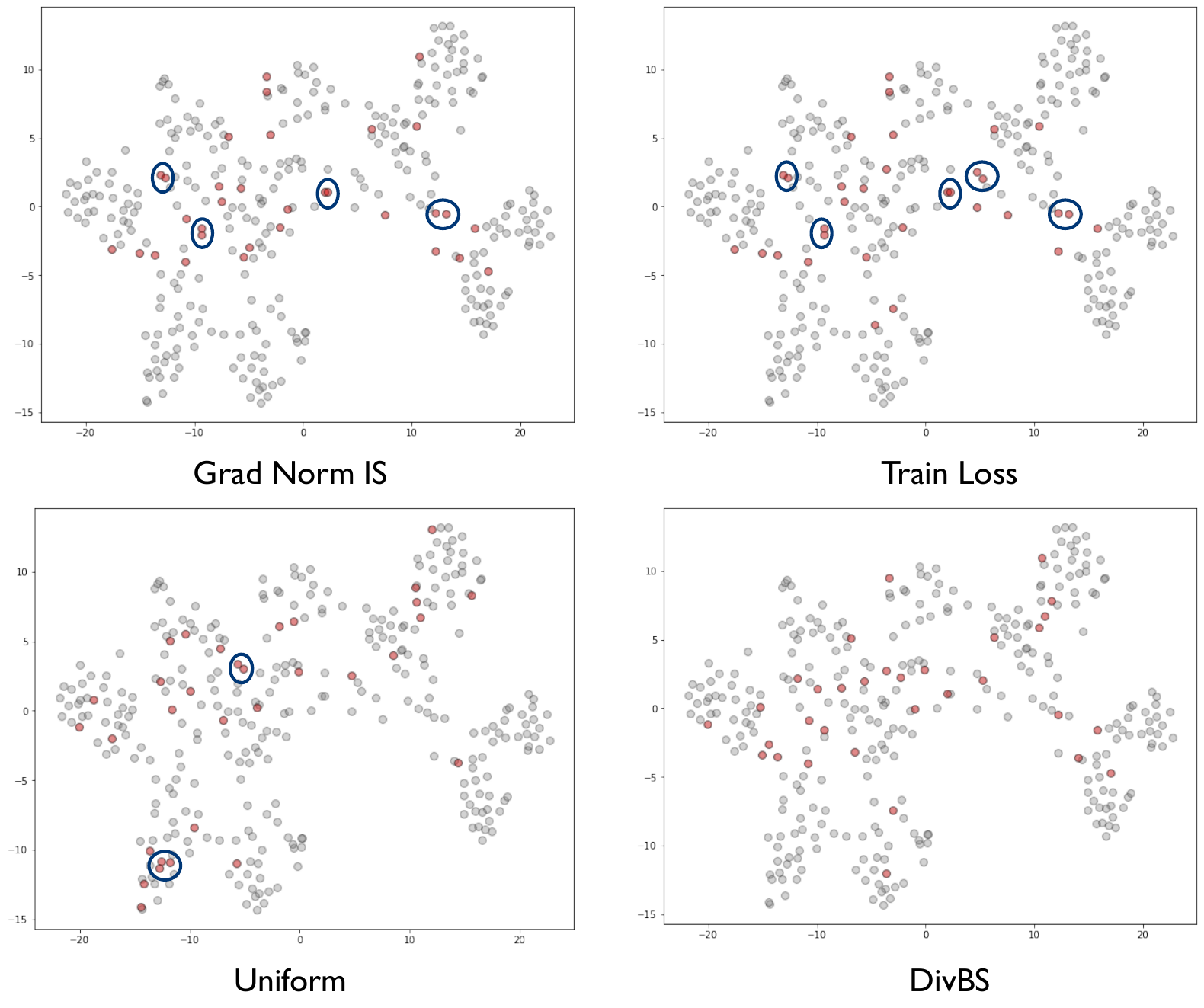}
    \caption{T-SNE visualization of data selected by different methods on CIFAR-10 with 10\% budget. Circles highlight redundant samples.}
    \label{fig:tsne}
\end{figure}

\subsection{Comparison with InfoBatch~\citep{qin2024infobatch}}
In this section, we compare the method InfoBatch~\citep{qin2024infobatch}. As it doesn't originally operate with a fixed budget (and the average budget exceeds our setup), we adapt it by using a percentile threshold based on the guidance\footnote{\urlinfobatch} from the official repository to fix the budget ratio. The results are presented in \cref{tab:com_w_infobath}. Note that, the results of InfoBatch are significantly lower than those of Uniform Sampling. This is due to the rescaling operation in InfoBatch, which can lead to unstable training, especially under the small budget. For example, with a budget of 10\%, if we set the percentile threshold for infobatch to 95\%, then infobatch needs to assign a weight of $\frac{0.95}{0.1-0.05}=19$ times to the selected low-loss samples, which clearly destabilizes the training process. In contrast, the default threshold for InfoBatch is the mean, and it only clips 50\% of samples below the mean, assigning a weight of $2$ to low-loss samples. This may suggest that, rather than weighting losses of samples, simply selecting a subset of samples may be safer for accelerating model training.

\subsection{Comparison with K-means++ Initialization Method}

Given that K-means++ initialization method is also build to sample diverse centers, we conduct comparison between DivBS and k-means++ initialization method in \cref{tab:kmeanspp}. K-means++ initialization has shown comparable results to Uniform sampling, achieving some improvement at 10\% and 20\% budget ratios. This can be attributed to K-means++ initialization placing some emphasis on diversity, which may become more important as the budget decreases. However, DivBS still outperforms K-means++ initialization significantly, possibly due to the following reasons: 1) K-means++ initialization, as a heuristic algorithm, probabilistically samples points based on their distances from previously selected points, without a stable performance guarantee; 2) K-means++ initialization only considers distances between subset elements without considering the representativeness of the selected subset for the entire batch. In contrast, our proposed objective simultaneously considers both the diversity within the subset and the representativeness of the subset.

\begin{table}[t]
    \centering
    \caption{Accuracies on CIFAR-100 for different budget ratios.}
    \label{tab:com_w_infobath}
    \begin{tabular}{cccc}
        \toprule
        {Budget ratio} & {10\%} & {20\%} & {30\%} \\
        \midrule
        Uniform & 70.61\% & 74.18\% & 75.98\% \\
        InfoBatch & 43.56\% & 68.22\% & 74.31\% \\
        \textbf{DivBS} & \textbf{73.11\%} & \textbf{76.10\%} & \textbf{77.21\%} \\
        \bottomrule
    \end{tabular}    
\end{table}

\begin{table}[t]
    \centering
    \caption{Final accuracies of DivBS and K-means++ initialization method on CIFAR-100 for different budget ratios. Full Data Training achieves 77.28\% accuracy.}\label{tab:kmeanspp}
    \begin{tabular}{cccc}
        \toprule
        {Budget ratio} & {10\%} & {20\%} & {30\%} \\
        \midrule
        Uniform & 70.61\% & 74.18\% & 75.98\% \\
        k-means++ initialization & 71.14\% & 74.35\% & 75.76\% \\
        \textbf{DivBS} & \textbf{73.11\%} & \textbf{76.10\%} & \textbf{77.21\%} \\
        \bottomrule
    \end{tabular}
\end{table}

\subsection{Error Bars}
We provide error bars of \cref{tab: classification-main} in \cref{tab: error bars}.

\begin{table*}[t]
\centering
\caption{Final accuracies ($\uparrow$, mean $\pm$ std) of \methodspace and  various baseline methods on CIFAR-10, CIFAR-100 and Tiny ImageNet with different budget ratio 10\%, 20\%, 30\%. Bold indicates the best results. Experiments show that \methodspace consistently outperforms all baselines.}
\label{tab: error bars}
 \vskip 0.02in
\resizebox{\textwidth}{!}{
\begin{tabular}{c|ccc|ccc|ccc}
\toprule
Method                 & \multicolumn{3}{c|}{CIFAR-10} & \multicolumn{3}{c|}{CIFAR-100} & \multicolumn{3}{c}{Tiny ImageNet} \\
\midrule\midrule
Full Data Training & \multicolumn{3}{c|}{95.50\%}         & \multicolumn{3}{c|}{77.28\%}          & \multicolumn{3}{c}{56.76\%}            \\
Budget ratio                & 10\%          & 20\%     & 30\%    & 10\%          & 20\%          & 30\%            & 10\%           & 20\%            & 30\%           \\
\midrule
        \multirow{2}{*}{Uniform} & 92.06\% & 93.76\% & 94.61\% & 70.61\% & 74.18\% & 75.98\% & 48.36\% & 51.71\% & 53.76\% \\
        & \textcolor{gray}{$\pm$ 0.19\%} & \textcolor{gray}{$\pm$ 0.14\%} & \textcolor{gray}{$\pm$ 0.19\%} & \textcolor{gray}{$\pm$ 0.34\%} & \textcolor{gray}{$\pm$ 0.37\%} & \textcolor{gray}{$\pm$ 0.31\%} & \textcolor{gray}{$\pm$ 0.23\%} & \textcolor{gray}{$\pm$ 0.28\%} & \textcolor{gray}{$\pm$ 0.32\%} \\
        \multirow{2}{*}{Train Loss} & 92.73\% & 93.87\% & 94.54\% & 65.12\% & 69.34\% & 72.62\% & 37.12\% & 45.23\% & 47.72\% \\
        & \textcolor{gray}{$\pm$ 0.22\%} & \textcolor{gray}{$\pm$ 0.16\%} & \textcolor{gray}{$\pm$ 0.21\%} & \textcolor{gray}{$\pm$ 0.36\%} & \textcolor{gray}{$\pm$ 0.31\%} & \textcolor{gray}{$\pm$ 0.37\%} & \textcolor{gray}{$\pm$ 0.25\%} & \textcolor{gray}{$\pm$ 0.30\%} & \textcolor{gray}{$\pm$ 0.25\%} \\
        \multirow{2}{*}{Grad Norm} & 65.23\% & 76.23\% & 82.34\% & 64.72\% & 69.23\% & 72.34\% & 37.24\% & 44.34\% & 48.24\% \\
        & \textcolor{gray}{$\pm$ 0.17\%} & \textcolor{gray}{$\pm$ 0.20\%} & \textcolor{gray}{$\pm$ 0.24\%} & \textcolor{gray}{$\pm$ 0.29\%} & \textcolor{gray}{$\pm$ 0.34\%} & \textcolor{gray}{$\pm$ 0.28\%} & \textcolor{gray}{$\pm$ 0.21\%} & \textcolor{gray}{$\pm$ 0.26\%} & \textcolor{gray}{$\pm$ 0.31\%} \\
        \multirow{2}{*}{Grad Norm IS} & 92.51\% & 93.78\% & 94.41\% & 69.34\% & 72.71\% & 73.21\% & 42.79\% & 47.34\% & 50.23\% \\
        & \textcolor{gray}{$\pm$ 0.20\%} & \textcolor{gray}{$\pm$ 0.25\%} & \textcolor{gray}{$\pm$ 0.16\%} & \textcolor{gray}{$\pm$ 0.35\%} & \textcolor{gray}{$\pm$ 0.30\%} & \textcolor{gray}{$\pm$ 0.26\%} & \textcolor{gray}{$\pm$ 0.23\%} & \textcolor{gray}{$\pm$ 0.28\%} & \textcolor{gray}{$\pm$ 0.13\%} \\
        \multirow{2}{*}{SVP} & 57.38\% & 73.87\% & 82.34\% & 31.23\% & 43.35\% & 50.73\% & 19.34\% & 28.97\% & 34.24\% \\
        & \textcolor{gray}{$\pm$ 0.15\%} & \textcolor{gray}{$\pm$ 0.08\%} & \textcolor{gray}{$\pm$ 0.22\%} & \textcolor{gray}{$\pm$ 0.27\%} & \textcolor{gray}{$\pm$ 0.32\%} & \textcolor{gray}{$\pm$ 0.26\%} & \textcolor{gray}{$\pm$ 0.17\%} & \textcolor{gray}{$\pm$ 0.22\%} & \textcolor{gray}{$\pm$ 0.27\%} \\
        \multirow{2}{*}{Moderate-BS} & 92.32\% & 93.57\% & 94.36\% & 70.21\% & 74.35\% & 75.34\% & 48.92\% & 51.36\% & 54.23\% \\
        & \textcolor{gray}{$\pm$ 0.18\%} & \textcolor{gray}{$\pm$ 0.23\%} & \textcolor{gray}{$\pm$ 0.18\%} & \textcolor{gray}{$\pm$ 0.23\%} & \textcolor{gray}{$\pm$ 0.18\%} & \textcolor{gray}{$\pm$ 0.24\%} & \textcolor{gray}{$\pm$ 0.23\%} & \textcolor{gray}{$\pm$ 0.20\%} & \textcolor{gray}{$\pm$ 0.32\%} \\
        \multirow{2}{*}{CCS-BS} & 92.61\% & 93.88\% & 94.81\% & 71.11\% & 74.42\% & 76.21\% & 49.18\% & 52.43\% & 54.17\% \\
        & \textcolor{gray}{$\pm$ 0.21\%} & \textcolor{gray}{$\pm$ 0.16\%} & \textcolor{gray}{$\pm$ 0.11\%} & \textcolor{gray}{$\pm$ 0.36\%} & \textcolor{gray}{$\pm$ 0.31\%} & \textcolor{gray}{$\pm$ 0.37\%} & \textcolor{gray}{$\pm$ 0.26\%} & \textcolor{gray}{$\pm$ 0.21\%} & \textcolor{gray}{$\pm$ 0.36\%} \\
        \multirow{2}{*}{\textbf{DivBS}} & \textbf{94.65\%} & \textbf{94.83\%} & \textbf{95.07\%} & \textbf{73.11\%} & \textbf{76.10\%} & \textbf{77.21\%} & \textbf{50.84\%} & \textbf{55.03\%} & \textbf{55.94\%} \\
        & \textcolor{gray}{$\pm$ 0.27\%} & \textcolor{gray}{$\pm$ 0.22\%} & \textcolor{gray}{$\pm$ 0.27\%} & \textcolor{gray}{$\pm$ 0.12\%} & \textcolor{gray}{$\pm$ 0.37\%} & \textcolor{gray}{$\pm$ 0.23\%} & \textcolor{gray}{$\pm$ 0.28\%} & \textcolor{gray}{$\pm$ 0.23\%} & \textcolor{gray}{$\pm$ 0.15\%} \\
        \bottomrule    
\end{tabular}
}
\end{table*}

\section{Other Sampling Methods Involving Diversity or Submodularity}
Curriculum learning involving redundency and diversity: self-paced learning with diversity (SPLD)~\citep{DBLP:conf/nips/JiangMYLSH14} is the first work to introduce diversity into curriculum learning, formalizing preferences for simple and diverse samples as a universal regularization term. MCL~\citep{DBLP:conf/iclr/ZhouB18} proposes that early training should focus on a small set of diverse samples, while later stages should prioritize training on larger, more challenging, and more homogeneous samples. Similar to MCL, DoCL~\citep{DBLP:conf/aistats/ZhouWB21} promotes diversity through regularization using a submodular function.

Submodular coreset selection: Craig~\citep{DBLP:conf/icml/MirzasoleimanBL20} attempt to find an coreset that approximates the gradients of the full dataset. They achieve this by transforming the gradient matching problem into the maximization of a monotone submodular. GLISTER~\citep{DBLP:conf/aaai/KillamsettySRI21} formulates coreset selection as a mixed discrete-continuous bi-level optimization problem. It aims to select a subset of the training data that maximizes the log-likelihood on a held-out validation set. Additionally, GLISTER establishes connections to submodularity.

Submodular active learning: \citet{DBLP:conf/icml/WeiIB15} discusses the connection between likelihood functions and submodularity. It demonstrates that under a cardinality constraint, maximizing the likelihood function is equivalent to maximizing submodular functions for Naive Bayes or Nearest Neighbor classifiers. This naturally provides a powerful tool for coreset selection. By introducing submodularity into Naive Bayes and Nearest Neighbor classifiers, they propose a novel framework for active learning called Filtered Active Submodular Selection (Fass). CAL~\citep{das2023accelerating} integrates continual learning techniques into active learning to alleviate the high training costs associated with active learning. Similarly, it employs submodular functions to regularize the sampling points.

Diversity-aware active learning: some sampling methods also focus on diversity of the chosen elements in the active learning area~\citep{ren2021survey}, where a model proactively chooses and queries the most informative data points for annotation, aiming to enhance its performance with minimal labeled examples. For example, \citet{sener2018active} theoretically formalize the data selection process as a k-Center problem and introduce the CoreSet algorithm, while \citet{agarwal2020contextual} substitute the Euclidean distance with context-aware KL-divergence. Determinantal Point Process (DPP)~\citep{kulesza2012determinantal,tremblay2019determinantal} is also an effective sampling method for preventing redundancy. 

Discussion: the majority of sampling methods involving diversity come with a high computational cost, requiring at least $O(N^2)$ or $O(N^3)$ to calculate set properties (such as pairwise distances or determinants) and $O(N^2)$ or $O(N^3)$ to perform the sampling process, where $N$ is the number of all candidate elements. As a result, they are suitable only for small-scale offline sampling and are not applicable for large-scale online selection. In contrast, our method has been demonstrated to be sufficiently lightweight, enabling its application in accelerating training within the online batch selection paradigm.

\section{Limitation and Future Work}
Online batch selection methods require an additional forward pass for selecting a subset in each batch, which somewhat limits the upper bound of acceleration, especially when the budget is small. Similar to previous research of online batch selection, our scope is also limited to the effects of different selection strategies. Exploring the integration of techniques discussed in \cref{sec:related work} such as hardware acceleration for forward pass, parallelization, or leveraging historical training information to avoid additional data loading and forward pass for maximum acceleration is a promising and noteworthy avenue.

\end{document}